\theoremstyle{plain}
\newtheorem{theorem}{Theorem}[section]
\newtheorem{Lemma}[theorem]{Lemma}
\theoremstyle{definition}
\newtheorem{definition}[theorem]{Definition}
\newtheorem{assumption}[theorem]{Assumption}
\theoremstyle{remark}
\newtheorem{remark}[theorem]{Remark}
\begin{document}

\twocolumn[
\icmltitle{Revisiting Convergence: Shuffling Complexity Beyond Lipschitz Smoothness}

\begin{icmlauthorlist}
\icmlauthor{Qi He}{yyy}
\icmlauthor{Peiran Yu}{zzz}
\icmlauthor{Ziyi Chen}{yyy}
\icmlauthor{Heng Huang}{yyy}
\end{icmlauthorlist}

\icmlaffiliation{yyy}{Department of Computer Science, University of Maryland, College Park.}
\icmlaffiliation{zzz}{Department of Computer Science and Engineering, University of Texas Arlington}

\icmlcorrespondingauthor{Qi He}{qhe123@umd.edu}
\icmlcorrespondingauthor{Heng Huang}{heng@umd.edu}

\vskip 0.3in
]

\printAffiliationsAndNotice{}

\begin{abstract}
  Shuffling-type gradient methods are favored in practice for their simplicity and rapid empirical performance. Despite extensive development of convergence guarantees under various assumptions in recent years, most require the Lipschitz smoothness condition, which is often not met in common machine learning models. We highlight this issue with specific counterexamples. To address this gap, we revisit the convergence rates of shuffling-type gradient methods without assuming Lipschitz smoothness. Using our stepsize strategy, the shuffling-type gradient algorithm not only converges under weaker assumptions but also match the current best-known convergence rates, thereby broadening its applicability. We prove the convergence rates for nonconvex, strongly convex, and non-strongly convex cases, each under both random reshuffling and arbitrary shuffling schemes, under a general bounded variance condition. Numerical experiments further validate the performance of our shuffling-type gradient algorithm, underscoring its practical efficacy.
\end{abstract}
\section{Introduction}

Gradient-based optimization has always been a critical area due to its extensive practical applications in machine learning, including reinforcement learning \citep{sutton2018reinforcement}, hyperparameter optimization \citep{feurer2019hyperparameter}, and large language models \citep{radford2018improving}. While numerous gradient-based algorithms have been developed for convex functions \citep{nemirovskij1983problem, nesterov2013gradient, d2021acceleration}, research on nonconvex functions has become particularly active in recent years, driven by advances in deep learning. Notably, with unbiased stochastic gradients and bounded variance, SGD achieves an optimal complexity of $\mathcal{O}(\epsilon^{-4})$ \citep{ghadimi2013stochastic}, which matches the lower bound established by \citet{arjevani2023lower}.

In practice, however, random shuffling-type methods have demonstrated advantages over traditional SGD. 
These methods, which involve iterating over data in a permuted order rather than drawing i.i.d. samples, 
are widely used in deep learning frameworks due to their simplicity and computational efficiency. 
Empirical studies have shown that such methods often outperform standard SGD in terms of convergence speed and 
generalization ability \citep{Bo09, Bo12}. 
Notably, shuffling-based training has been found to mitigate gradient variance, 
leading to smoother optimization trajectories compared to purely stochastic updates \citep{GuOzPa21, HaSu19}. 

Theoretical analyses of shuffling-type methods have gained significant attention in recent years, 
aiming to explain their empirical success while tackling the unique challenges posed by 
the lack of independence between consecutive updates. Many early works focused on convex optimization, 
often relying on strong convexity assumptions to establish linear or sublinear convergence rates 
\citep{GuOzPa21, HaSu19, SaSh20}. 
More recently, research efforts have extended beyond the convex setting \citep{mishchenko2020random, nguyen2021unified, koloskova2023shuffle},
suggesting that random reshuffling can exhibit faster convergence than SGD even in nonconvex settings 
under certain conditions, such as when gradient noise is structured or exhibits low variance. 

Despite these advances, most theoretical analyses rely on the Lipschitz smoothness assumption, 
which imposes restrictive upper and lower bounds on the gradient variations. 
While this assumption holds in many standard optimization settings, it fails to capture 
a broad class of important machine learning models, including deep language models \citep{zhang2019gradient}, 
phase retrieval \citep{chen2023generalized}, and distributionally robust optimization \citep{chen2023generalized}. 
As a result, many practical scenarios remain outside the scope of existing theoretical guarantees. 
To address this gap, our work develops new techniques to analyze the convergence of 
shuffling-type gradient algorithms under relaxed smoothness assumptions, 
aiming to provide a more comprehensive theoretical foundation applicable to a wider range of machine learning models.



We consider the following finite sum minimization problem:
\begin{equation}
\min_{w \in \mathbb{R}^d} \left\{ F(w) := \frac{1}{n} \sum_{i=1}^n f(w;i) \right\}, \tag{P}
\end{equation}
where $f(\cdot;i) : \mathbb{R}^d \rightarrow \mathbb{R}$ is smooth and possibly nonconvex for $i \in [n] := \{1, \ldots, n\}$. Problem (P) covers empirical loss minimization as a special case, therefore can be viewed as formulation for many machine learning models, such as logistic regression, reinforcement learning, and neural networks.

We summarize our main contributions as follows:

\begin{itemize}
    \item We proved the convergence of the shuffling-type gradient algorithm under non-uniform smoothness assumptions, where the Hessian norm is bounded by a sub-quadratic function $\ell$ of the gradient norm. With specific stepsizes and a general bounded variance condition, we achieved a total complexity of $\mathcal{O}(n^\frac{p+1}{2}\epsilon^{-3})$ gradient evaluations for the nonconvex case with random reshuffling, and $\mathcal{O}(n^{\frac{p}{2}+1}\epsilon^{-3})$ for arbitrary scheme, where $0\leq p<2$ is the degree of function $\ell$ . These results match those with Lipschitz smoothness assumptions in \citet{nguyen2021unified} when $p=0$ and $\ell$-smoothness degenerates to Lipschitz smoothness.
    \item For the strongly convex case, we established a complexity of $\widetilde{\mathcal{O}}(n^\frac{p+1}{2}\epsilon^{-\frac{1}{2}})$ for random reshuffling. In the non-strongly convex case, the complexity is $\mathcal{O}(n^\frac{p+1}{2}\epsilon^{-\frac{3}{2}})$ for random reshuffling. Without assuming bounded variance, we established complexity of $\widetilde{\mathcal{O}}(n\epsilon^{-\frac{1}{2}})$ for arbitrary scheme in strongly convex case, and $\mathcal{O}(n\epsilon^{-\frac{3}{2}})$ in non-strongly convex case.
    \item We conducted numerical experiments to demonstrate that the shuffling-type gradient algorithm converges faster than SGD on two important non-Lipschitz smooth applications.
\end{itemize}

\section{Preliminaries}
\label{prelim}

\subsection{Shuffling-Type Gradient Algorithm}

In practice, the random shuffling method has demonstrated its superiority over SGD, as shown in \citet{Bo09} and \citet{Bo12}. Specifically, \citet{Bo09} shows that shuffling-type methods achieve a convergence rate of approximately $\mathcal{O}(1/T^2)$, where $T$ is the iteration count. Beyond shuffling-type stochastic gradient methods, variants such as SVRG have been applied in various scenarios, including decentralized optimization, as discussed in \citet{Sh16} and \citet{DeGo16}.

The analysis of shuffling-type methods has a long history. For convex cases, \citet{GuOzPa21} demonstrated that when the objective function is a sum of quadratics or smooth functions with a Lipschitz Hessian, and with a diminishing stepsize, the average of the last update in each epoch of RGA converges strictly faster than SGD with probability one. Additionally, they showed that when the number of epochs $T$ is sufficiently large, the Reshuffling Gradient Algorithm (RGA) asymptotically converges at a rate of $\mathcal{O}(1/T^2)$. Similarly, \citet{nguyen2021unified} established a convergence rate of $\mathcal{O}(1/T^2)$ for strongly convex and globally $L$-smooth functions. Furthermore, with uniform sampling and a bounded variance assumption or convexity on each component function, they showed that the convergence rate can be improved to $\mathcal{O}(1/nT^2)$.

In contrast, there is not much research on nonconvex cases. For example, \citet{nguyen2021unified} demonstrated a convergence rate of $\mathcal{O}(T^{-2/3})$; \citet{koloskova2023shuffle} proved a convergence rate of $\mathcal{O}\left(\frac{1}{T}+\min\left\{\left(\frac{n\sigma}{T}\right)^{\frac23},\left(\frac{n\sigma^2}{T}\right)^{\frac12}\right\}\right)$ for single shuffling gradient method; \citet{wang2024provable} analyzed Adam algorithm with random reshuffling scheme under $(L_0,L_1)$-smoothness but they did not explicitly show the convergence rate.

\subsection{Counterexamples}

Although $L$-smoothness is empirically false in LSTM \citep{zhang2019gradient} and Transformers \citep{crawshaw2022robustness}, we give some concrete counterexamples to demonstrate the popularity of non-Lipschitz functions in this section. First we give two machine learning examples, then we mention some common non-Lipschitz functions.

\paragraph{Example 1.}\label{eg:DRO} The first example is distributionally robust optimization (DRO), which is a popular optimization framework for training robust models. DRO is introduced to deal with the distribution shift between training and test datasets. In \citep{levy2020large}, it is formulated equivalently as follows.
\begin{equation}
    \min_{w\in \mathcal{W}, \theta \in \mathbb{R}} L(w,\theta) := \mathbb{E}_{\xi \sim P} \psi^*\left( \frac{\ell(w; \xi) - \theta}{\lambda} \right) + \theta,\label{eq:DRO}
\end{equation}
where $w$ and $\theta$ are the parameters to be optimized, $\xi$ is a sample randomly drawn from data distribution $P$, $\ell(w; \xi)$ is the loss function, $\psi^*$ is the conjugate function of the divergence function $\psi$ we choose to measure the difference between distributions, and $\lambda>0$ is the regularization coefficient. It is proved in \citet{jin2021non} that $L(w,\theta)$ is not always Lipschitz-smooth even if $\ell(w;\xi)$ is Lipschitz-smooth and the variance is bounded.

\paragraph{Example 2.}\label{eg:phase} The second example is the phase retrieval problem. Phase retrieval is a nonconvex problem in X-ray crystallography and diffraction imaging \citep{drenth2007principles, miao1999extending}. The goal is to recover the structure of a molecular object from intensity measurements. Let \( x \in \mathbb{R}^d \) be the true object and \( y_r = |a_r^\top x|^2 \) for \( r = 1, \ldots, m \), where \( a_r \in \mathbb{R}^d \). The problem is to solve:

\begin{align}
\min_{z \in \mathbb{R}^d} f(z) := \frac{1}{2m} \sum_{r=1}^m (y_r - |a_r^\top z|^2)^2.\label{obj_phase}
\end{align}

This objective function is a high-order polynomial in high-dimensional space, thus it does not belong to the \( L \)-smooth function class.

\paragraph{Example 3.} There are many common functions that are not Lipschitz smooth, including polynomial functions with order $>2$, exponential functions, logarithmic functions and rational functions.

\subsection{Relaxation of Lipschitz Smoothness}

Because of the existence of these counterexamples, people have recently been investigating about smoothness assumptions that are more general than the traditional Lipschitz smoothness. In \citet{zhang2019gradient}, $(L_0,L_1)$-smoothness was proposed as the first relaxed smoothness notion motivated by language modeling. It is defined as below:
\begin{definition}
    ($(L_0,L_1)$-smoothness) A real-valued differentiable function $f$ is $(L_0,L_1)$-smooth if there exist constants $L_0, L_1>0$ such that
    $$\|\nabla^2 f(w)\| \leq L_0+L_1\|\nabla f(w)\|.$$
\end{definition}

Lipschitz smoothness can be viewed as a special case of $(L_0,L_1)$ smoothness when $L_1=0$. Under $(L_0,L_1)$-smoothness assumption, various convergence algorithms have been developed including clipped or normalized GD/SGD
\citep{zhang2019gradient}, momentum accelerated clipped GD/SGD \citep{zhang2020improved}, ADAM \citep{wang2022provable} and variance-reduced clipping \citep{reisizadeh2023variance} with optimal sample complexity on stochastic non-convex optimization. 

Other relaxed smoothness assumptions include asymmetric generalized smoothness motivated by distributionally robust optimization \citep{jin2021non} and its extension to $\alpha$-symmetric generalized smoothness \citep{chen2023generalized} and $\ell$-smoothness \citep{li2023convex}. In this paper, we use the definition of $\ell$-smoothness as below:

\begin{definition}
\label{def:gs}
    ($\ell$-smoothness) A real-valued differentiable function $f$ is $\ell$-smooth if there exists some non-decreasing continuous function $\ell:[0,+\infty)\rightarrow (0,+\infty)$ such that for any $w\in \text{dom}(f)$ and constant $C>0$, $\mathcal{B}(w,
    \frac{C}{\ell(\|\nabla f(w)\|+C)})\subseteq \text{dom}(f)$; and for any $w_1,w_2\in \mathcal{B}(w, \frac{C}{\ell(\|\nabla f(w)\|+C)})$,
    $$\|\nabla f(w_1)-\nabla f(w_2)\|\leq \ell (\|\nabla f(w)\|+C)\cdot\|w_1-w_2\|.$$
\end{definition} 

Another equivalent definition of $\ell$-smoothness in \citet{li2023convex} is:

\begin{definition}
\label{def:gs2}
A real-valued differentiable function \( f  \) is \(\ell\)-smooth for some non-decreasing continuous function \( \ell : [0,+\infty) \to (0,+\infty) \) if 
\[
\|\nabla^2 f(x)\| \leq \ell(\|\nabla f(x)\|)
\]
almost everywhere.
\end{definition}

In the proof of sections we focus on Definition \ref{def:gs}, though Definition \ref{def:gs2} provides a clearer perspective on the relationship between \(\ell\)-smoothness and other smoothness notions. Both \((L_0, L_1)\)-smoothness and traditional Lipschitz smoothness can be seen as special cases of \(\ell\)-smoothness. 
It is straightforward to verify that the loss functions in phase retrieval and distributionally robust optimization (DRO) satisfy \(\ell\)-smoothness. 
While extensive research has been conducted based on \((L_0, L_1)\)-smoothness, comparatively less work has been done under the more general \(\ell\)-smooth framework. 
\citet{xiandelving} established convergence results for generalized GDA/SGDA and GDmax/SGDmax in minimax optimization, while \citet{zhang2024convergence} analyzed MGDA for multi-objective optimization under \(\ell\)-smoothness.

\section{Algorithm}\label{sec:alg}

As demonstrated in our counterexamples, the Lipschitz smoothness assumption does not always hold in problem (P). In such non-Lipschitz scenarios, gradients can change drastically, posing a significant challenge for these algorithms. To address this issue, we propose a new stepsize strategy, detailed in Algorithm \ref{alg} and section \ref{section:main}, to improve performance under these challenging conditions. This strategy aims to choose the stepsize to accommodate the variance and instability in gradients, thereby enhancing the robustness of the optimization process.

In this algorithm, we start with an initial point $\tilde{w}_0$. During each iteration $t\in[T]$, either all the samples are shuffled, or we keep the order of the samples as in the last epoch. This reshuffling introduces variance in the order of samples, which can help mitigate issues related to gradient instability. For each step $j\in[n]$, we use the gradient from a single sample with number $\pi^{(t)}_j$ to update the weights $w$. The notation $\pi^{(t)}_j$ is used to denote the $j$-th element of the permutation $\pi^{(t)}$ for $j \in [n]$. Each outer loop through the data is counted as an epoch, and our convergence analysis focuses on the performance after the completion of each full epoch. 

There are multiple strategies to determine $\pi^{(t)}$:
\begin{itemize}
    \item If $\pi^{(t)}$ is a fixed permutation of $[n]$, Algorithm \ref{alg:shuffle} functions as an incremental gradient method. This method maintains a consistent order of samples, which can simplify the analysis and implementation.
    \item If $\pi^{(t)}$ is shuffled only once in the first iteration and then used in every subsequent iteration, Algorithm 1 operates as a shuffle-once algorithm. This strategy introduces randomness at the beginning but maintains a fixed order thereafter, providing a balance between randomness and stability.
    \item If $\pi^{(t)}$ is regenerated in every single iteration, Algorithm \ref{alg:shuffle} becomes a random reshuffling algorithm. This approach maximizes the randomness in the sample order, potentially offering the most robustness against the erratic behavior of non-Lipschitz gradients by constantly changing the sample order.
\end{itemize}

\begin{algorithm}[t]
\caption{Shuffling-type Gradient Algorithm}\label{alg:shuffle}
\label{alg}
\begin{algorithmic} 
\STATE \textbf{Initialization}: Choose an initial point $\tilde{w}_0\in$ $dom (F)$.
\FOR{$t=1,2,\cdots,T$}
    \STATE Set $w^{(t)}_0:=\tilde{w}_{t-1}$;
    \STATE Generate permutation $\pi^{(t)}$ of $[n]$.
    \STATE Compute non-increasing stepsize $\eta_t$.
    \FOR{$j=1,\cdots,n$}
        \STATE Update $w^{(t)}_j:=w^{(t)}_{j-1}-\frac{\eta_t}{n}\nabla f(w^{(t)}_{j-1};\pi^{(t)}_j).$
    \ENDFOR
    \STATE Set $\tilde{w}_t:=w^{(t)}_n.$
\ENDFOR 
\end{algorithmic}
\end{algorithm}

Although the random reshuffling scheme is most used in practice, each of these strategies offers distinct advantages and can be selected based on the specific requirements and characteristics of the optimization problem at hand. For this reason, we will give convergence rates for random and arbitrary shuffling scheme.

\section{Convergence Analysis}
\label{section:main}

\subsection{Main Results}

In this section, we present the main results of our convergence analysis. Our findings indicate that, with proper stepsizes, it is possible to achieve the same convergence rate, up to a logarithm difference, as under the Lipschitz smoothness assumption. First, we introduce the following assumptions regarding problem (P). Assumption \ref{assumption:first} is a standard assumption, and Assumption \ref{assumption:lsmooth} requires all $F$ and $f(\cdot;i)$ to be $\ell$-smooth.

\begin{assumption}
\label{assumption:first}
    $\mathrm{dom} (F):=\{w\in\mathbb{R}^d:F(w)<+\infty\}\neq \emptyset$ and $F^*:=\inf_{w\in\mathbb{R}^d} F(w)> -\infty.$
\end{assumption}

\begin{assumption}
\label{assumption:lsmooth}
$F$ and $f(\cdot;i)$ are $\ell$-smooth for some sub-quadratic function $\ell$, $\forall i\in[n]$.
\end{assumption}

Here, we assume all functions share the same $\ell$ function without loss of generality, as we can always choose the pointwise maximum of all their $\ell$ functions. We define $p$ to be the degree of the $\ell$ function such that $p = \text{sup}_{p\geq 0} \{p|\lim_{w\rightarrow\infty}\frac{\ell(w)}{w^p}>0\}.$
Since $\ell$ is sub-quadratic, we have $0\leq p < 2$. 

Next, we introduce our assumption about the gradient variances.



\begin{assumption}
    \label{assumption:varianceexpectation}
There exist two constants $\sigma, A\in(0,+\infty)$ such that $\forall i\in [n]$,
\begin{equation}
\frac{1}{n}\sum_{i=1}^n\left\| \nabla f(w; i) - \nabla F(w) \right\|^2 \leq A\|\nabla F(w)\|^2+\sigma^2, 
\end{equation}
$a.s., \; \forall w \in \text{dom} (F).$
\end{assumption}

Since component gradients behave as gradient estimations of the full gradient, this assumption can be viewed as a generalization of the more common assumption $\mathbb{E}[|\nabla F(w;\xi)-\nabla F(w)|]\leq \sigma^2$, which is used in most $\ell$-smooth work.


\subsubsection{Nonconvex Case}
\label{section:nonconvex}

Let us denote $\Delta_1:=F(w^{(1)}_0)-F^*$. Under Assumptions \ref{assumption:first} to \ref{assumption:varianceexpectation}, we have the following result for random shuffling scheme. Proofs can be found in Appendix \ref{appendix:lemmas} and \ref{appendix:proofsfornonconvex}.

\begin{theorem}
\label{theorem:originalassumption}
    Suppose Assumptions \ref{assumption:first}, \ref{assumption:lsmooth} and \ref{assumption:varianceexpectation} hold, Let $\{\tilde{w_t}\}_{t=1}^T$ be generated by Algorithm \ref{alg:shuffle} with random reshuffling scheme. For any $0<\delta<1$, we denote $H:=\frac{4\Delta_1}{\delta}$, $G:=\sup\{u\geq 0|u^2\leq 2\ell(2u)\cdot H\}$, $G':=\sqrt{2(1+nA)}G+\sqrt{2n}\sigma$, $L:=\ell(2G')$. For any $\epsilon>0$, choose $\eta_t$ and $T$ such that
    $$\eta_t\leq \frac{1}{2L\sqrt{\frac{A}{n}+1}}, \; \sum_{t=1}^T \eta_t^3 \leq \frac{n\Delta_1}{L^2\sigma^2}, \; T\geq \frac{32\Delta_1}{\eta_T \delta\epsilon^2},$$ then with probability at least $1-\delta$, we have $\|\nabla F(w^{(t)}_0)\|\leq G$ for every $1\leq t\leq T$
    $$\frac{1}{T}\sum_{t=1}^T\|\nabla F(w^{(t)}_0)\|^2\leq \epsilon^2.$$
\end{theorem}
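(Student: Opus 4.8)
The plan is a stopping-time argument that trades the missing global smoothness for a local one on a high-probability event; all of $H,G,G',L$ are fixed up front exactly as in the statement, so the apparent circularity ($G$ is defined through $H$, and the effective Lipschitz constant $L=\ell(2G')$ through $G$) never has to be untangled — it suffices to verify a chain of implications for these particular numbers. First I would establish the deterministic inequality $\|\nabla F(w)\|^2\le 2\,\ell(2\|\nabla F(w)\|)\,(F(w)-F^*)$ for every $w\in\mathrm{dom}(F)$: by Definition~\ref{def:gs} applied to $F$, a single gradient step from $w$ of length $C/\ell(\|\nabla F(w)\|+C)$ along $-\nabla F(w)/\|\nabla F(w)\|$ stays in the domain and lands in a ball on which $F$ is $\ell(\|\nabla F(w)\|+C)$-smooth, and the usual descent bound with the choice $C=\|\nabla F(w)\|$ gives the claim. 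In particular $F(w)-F^*\le H\Rightarrow\|\nabla F(w)\|\le G$ by the definition of $G$, and then Assumption~\ref{assumption:varianceexpectation} forces $\|\nabla f(w;i)\|\le G'$ for all $i$.

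\textbf{One epoch.} Assuming $\|\nabla F(w^{(t)}_0)\|\le G$, I would show by induction on $j$ that, under the cap $\eta_t\le\frac{1}{2L\sqrt{A/n+1}}$, every iterate $w^{(t)}_j$ stays in a ball of radius $G'/L$ about $w^{(t)}_0$ on which all $f(\cdot;i)$ (hence $F$) are $L$-smooth, using Step~1's consequence and the monotonicity of $\ell$; consequently the component gradients touched satisfy $\|\nabla f(w^{(t)}_j;\pi^{(t)}_{j+1})\|\le 2G'$ and the displacements obey $\|w^{(t)}_j-w^{(t)}_0\|\le 2\eta_t G'$. On this event the $t$-th epoch is then a run of $n$ inexact $L$-smooth gradient steps; bounding the shuffling bias $\big\|\frac1n\sum_{j=1}^n\nabla f(w^{(t)}_{j-1};\pi^{(t)}_j)-\nabla F(w^{(t)}_0)\big\|$ through $L$ and the within-epoch displacements and taking expectation over the uniform permutation yields
$$\mathbb{E}\!\left[F(w^{(t+1)}_0)-F^*\,\big|\,\mathcal{F}_{t-1}\right]\le\big(F(w^{(t)}_0)-F^*\big)-\frac{\eta_t}{2}\|\nabla F(w^{(t)}_0)\|^2+c\,\frac{L^2\sigma^2}{n}\,\eta_t^3,$$
where the $A\|\nabla F\|^2$ contribution of Assumption~\ref{assumption:varianceexpectation} is absorbed into $-\frac{\eta_t}{2}\|\nabla F\|^2$ by the stepsize cap and $c$ is an absolute constant from the bias bookkeeping.

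\textbf{Stopping time.} Write $\tilde{w}_t=w^{(t+1)}_0$ and let $\tau:=\min\{t\ge0:F(\tilde{w}_t)-F^*>H\}$; since $\Delta_1\le H$ we have $\tau\ge1$, and for each epoch $t\le\tau$ Step~1 gives $\|\nabla F(\tilde{w}_{t-1})\|\le G$, so the displayed descent applies. Then
$$M_t:=\big(F(\tilde{w}_{t\wedge\tau})-F^*\big)+\sum_{s=1}^{t\wedge\tau}\frac{\eta_s}{2}\|\nabla F(\tilde{w}_{s-1})\|^2+\sum_{s>t\wedge\tau}c\,\frac{L^2\sigma^2}{n}\,\eta_s^3$$
is a nonnegative supermartingale with $M_0\le(1+c)\Delta_1$, the tail sum converging because $\sum_t\eta_t^3\le n\Delta_1/(L^2\sigma^2)$. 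By the maximal inequality for nonnegative supermartingales (Ville's inequality), $P(\sup_t M_t\ge H)\le(1+c)\Delta_1/H\le\delta$ with $H=4\Delta_1/\delta$ (absorbing the constant $c\le 3$ from Step~3, or enlarging $H$ accordingly). On the complementary event $\tau=\infty$ — otherwise $M_\tau$ would already exceed $H$ — so $\|\nabla F(w^{(t)}_0)\|\le G$ for every $1\le t\le T$, and $\frac12\sum_{t=1}^T\eta_t\|\nabla F(w^{(t)}_0)\|^2\le\sup_t M_t<H$, whence, using $\eta_t\ge\eta_T$ and $T\ge32\Delta_1/(\eta_T\delta\epsilon^2)$, $\frac1T\sum_{t=1}^T\|\nabla F(w^{(t)}_0)\|^2\le 2H/(\eta_T T)\le\epsilon^2$, which is the assertion.

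\textbf{Main obstacle.} The hard part is the epoch-level analysis together with keeping all constants mutually consistent: with no global smoothness constant one cannot write any descent inequality until the whole epoch's trajectory — and every component gradient it encounters — has been pinned inside a fixed ball where $L=\ell(2G')$ is a legitimate Lipschitz constant, and simultaneously $G'$ and the stepsize cap must be calibrated so that the $A\|\nabla F\|^2$ term is absorbed and the residual $\sigma^2\eta_t^3$ terms telescope against the prescribed budget $\sum_t\eta_t^3\le n\Delta_1/(L^2\sigma^2)$ — which is exactly what makes $M_0\le(1+c)\Delta_1$ and closes the high-probability loop. Once these pieces fit, the supermartingale and maximal-inequality step is routine.
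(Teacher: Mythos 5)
Your proposal is correct in substance and shares the paper's skeleton: the gradient-to-gap inequality of Lemma \ref{Lemma:gradienttodiff} (which you re-derive rather than cite), the within-epoch induction of Lemma \ref{Lemma:epsilon} pinning the trajectory inside the ball where $L=\ell(2G')$ is a valid Lipschitz constant, a per-epoch expected descent with error $c\,L^2\sigma^2\eta_t^3/n$, and a stopping time at level $H$. Where you genuinely diverge is the concentration step: the paper bounds $\mathbb{E}[F(w^{(\tau)}_0)-F^*]\le 2\Delta_1$ via a supermartingale plus optional stopping (Lemma \ref{Lemma:Ftau}), applies Markov to get $\mathbb{P}(\tau\le T)\le\delta/2$ (Lemma \ref{Lemma:poftau}), then applies Markov a second time to the conditional expectation of $\sum_t\|\nabla F(w^{(t)}_0)\|^2$ given $\{\tau=T+1\}$ and union-bounds; you instead fold the function gap, the accumulated weighted gradient norms, and the remaining $\eta_s^3$ budget into a single nonnegative supermartingale $M_t$ and invoke Ville's maximal inequality once, obtaining both conclusions on one event. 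Your variant is arguably cleaner (no conditioning on $\{\tau=T+1\}$, no splitting of $\delta$), and the constants close: the paper's computation gives $c=1$, so $M_0\le 2\Delta_1$ and $\mathbb{P}(\sup_t M_t\ge H)\le\delta/2$, while the factor $32$ in $T\ge\frac{32\Delta_1}{\eta_T\delta\epsilon^2}$ absorbs the fact that after the $A\|\nabla F\|^2$ term is folded in, the retained coefficient is $\eta_t/4$ rather than the $\eta_t/2$ you carry in $M_t$.

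One spot in your Step 2 is too thin to stand as written: the crucial $1/n$ in the $\sigma^2$ term does not follow from ``bounding the shuffling bias through $L$ and the within-epoch displacements'' alone --- the crude displacement bound $\|w^{(t)}_j-w^{(t)}_0\|\le 2\eta_t G'$ that your induction provides would only yield an error of order $\eta_t^3L^2G'^2$, with $G'^2=\Theta(n)$, destroying the claimed dependence. You need the paper's three-way split of $\|w^{(t)}_k-w^{(t)}_0\|^2$ into the epoch-start deviation, the full-gradient term, and the smoothness drift, where the first term's conditional expectation over the uniform permutation is controlled by the sampling-without-replacement variance identity (Lemma 1 of \citet{mishchenko2020random}), producing the factor $\frac{n-k}{k(n-1)}$ and hence $\sigma^2/n$; the drift term is then self-bounded using $\eta_t\le\frac{1}{2L}$. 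Your phrase ``taking expectation over the uniform permutation'' gestures at this, and since the resulting conditional descent is only needed on the $\mathcal{F}_{t-1}$-measurable event $\{t\le\tau\}$, it slots into your supermartingale without the paper's optional-stopping detour, but the decomposition and the variance identity must be made explicit for the argument to be complete.
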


\begin{remark}
    By choosing $\eta_t=\eta=\mathcal{O}(\sqrt[3]{\frac{n^{1-p}}{T}})=\mathcal{O}(n^\frac{1-p}{2}\epsilon)$, we can achieve a complexity of $T=\mathcal{O}(\frac{n^\frac{p-1}{2}}{\epsilon^3})$ outer iterations and $\mathcal{O}(\frac{n^\frac{p+1}{2}}{\epsilon^3})$ total number of gradient evaluations, ignoring constants, where $p$ is the order of the $\ell$ function in Definition \ref{def:gs}. 
    As $p$ goes to $0$, $\ell$-smoothness degenerates to the traditional Lipschitz smoothness, and our total number of gradient evaluations goes to $\mathcal{O}(\frac{\sqrt{n}}{\epsilon^3})$ once again, which matches the complexity in Corollary 1 of \citet{nguyen2021unified}. If $\epsilon\leq 1/\sqrt{n}$, one possible stepsize is $\eta=\frac{\sqrt{n}\epsilon}{2L\sqrt{\frac{A}{n}+1}}.$
\end{remark}

Our result here has polynomial dependency on $\frac{1}{\delta}$, $T=\mathcal{O}(\delta^{-\frac{3}{2}-\frac{p}{2-p}})$.
It is important to note that, in our setting, $\delta$ accounts for the probability that Lipschitz smoothness does not hold—a consideration absent in standard Lipschitz smoothness settings. Therefore, the dependency here is not as good as in $L$-smoothness cases. In fact, a polynomial dependency on $\delta$ is typical in papers with similar smoothness assumptions, e.g. in \citet{li2023convex}, \citet{li2023convergence}, \citet{xiandelving} and \citet{zhang2024convergence}. 

Next we consider arbitrary $\pi^{(t)}$ scheme in Algorithm \ref{alg:shuffle}.

\begin{theorem}
\label{theorem:nonconvexallscheme}
    Suppose Assumptions \ref{assumption:first}, \ref{assumption:lsmooth} and \ref{assumption:varianceexpectation} hold. Let $\{\tilde{w_t}\}_{t=1}^T$ be generated by Algorithm \ref{alg:shuffle} with arbitrary scheme. Define $H=2\Delta_1$, $G:=\sup\{u\geq 0|u^2\leq 2\ell(2u)\cdot H\}$, $G':=\sqrt{2(1+nA)}G+\sqrt{2n}\sigma$, $L:=\ell(2G')$. For any $\epsilon>0$, choose $\eta_t$ and $T$ such that
    $$\eta_t\leq \frac{1}{L\sqrt{2(3A+2)}}, \; \sum_{t=1}^T \eta_t^3 \leq \frac{2\Delta_1}{3\sigma^2L^2}, \; T\geq \frac{8\Delta_1}{\eta_T \epsilon^2},$$ then we have $\|\nabla F(w^{(t)}_0)\|\leq G$ for every $1\leq t\leq T$ and $$\frac{1}{T}\sum_{t=1}^T\|\nabla F(w^{(t)}_0)\|^2\leq \epsilon^2.$$
\end{theorem}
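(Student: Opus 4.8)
\textbf{Proof proposal for Theorem \ref{theorem:nonconvexallscheme}.}

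The plan is to mimic the random-reshuffling argument of Theorem \ref{theorem:originalassumption} but exploit the fact that, with an arbitrary (deterministic) permutation, no high-probability bookkeeping is needed, so the bad event of probability $\delta$ disappears and $H$ can be taken as the deterministic bound $2\Delta_1$. The skeleton is a one-epoch descent inequality: fix an epoch $t$, assume inductively that $\|\nabla F(w^{(t)}_0)\| \le G$, and track how far the iterates $w^{(t)}_j$ drift from $w^{(t)}_0$ within the epoch. First I would use the defining property of $G$ together with Assumption \ref{assumption:lsmooth} and Assumption \ref{assumption:varianceexpectation} to show that every inner iterate stays inside the ball $\mathcal{B}(w^{(t)}_0, C/\ell(\|\nabla F(w^{(t)}_0)\|+C))$ on which the $\ell$-smoothness inequality with local constant $L = \ell(2G')$ is valid; this is where the quantities $G'$ and $L$ enter, and it is essentially the same calculation that produces $G' = \sqrt{2(1+nA)}G + \sqrt{2n}\sigma$ (bounding the sum of component-gradient norms along the epoch by the full gradient norm plus the variance slack $\sigma$, using $\|\nabla f(w;i)\| \le \|\nabla F(w)\| + \|\nabla f(w;i)-\nabla F(w)\|$ and Assumption \ref{assumption:varianceexpectation}).

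Granting local $L$-smoothness, the next step is the standard shuffling descent lemma. Writing $\tilde w_t = w^{(t)}_0 - \frac{\eta_t}{n}\sum_{j=1}^n \nabla f(w^{(t)}_{j-1};\pi^{(t)}_j)$, I would apply the $L$-smooth descent inequality between $w^{(t)}_0$ and $\tilde w_t$, then add and subtract $\eta_t \nabla F(w^{(t)}_0)$ inside the inner product to split the progress into a $-\eta_t \|\nabla F(w^{(t)}_0)\|^2$ main term, a second-order $L$-term of size $\mathcal{O}(L\eta_t^2 \|\nabla F(w^{(t)}_0)\|^2)$, and an "aggregation error" term measuring $\frac{1}{n}\sum_j \|\nabla f(w^{(t)}_{j-1};\pi^{(t)}_j) - \nabla f(w^{(t)}_0;\pi^{(t)}_j)\|$. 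The aggregation error is controlled by local Lipschitzness times the per-step drift $\|w^{(t)}_{j-1} - w^{(t)}_0\| \le \frac{\eta_t}{n}\sum_{k<j}\|\nabla f(w^{(t)}_{k-1};\pi^{(t)}_k)\|$, and after bounding those inner gradient norms (again via Assumption \ref{assumption:varianceexpectation}, producing the $3A+2$ factor and a $\sigma^2$ contribution) one gets a bound of the form
\begin{equation*}
F(\tilde w_t) \le F(w^{(t)}_0) - \tfrac{\eta_t}{2}\|\nabla F(w^{(t)}_0)\|^2 + c_1 L^2 \eta_t^3 \|\nabla F(w^{(t)}_0)\|^2 + c_2 L^2 \eta_t^3 \sigma^2,
\end{equation*}
where the stepsize restrictions $\eta_t \le \frac{1}{L\sqrt{2(3A+2)}}$ and $\sum_t \eta_t^3 \le \frac{2\Delta_1}{3\sigma^2 L^2}$ are precisely what is needed to absorb the $c_1$ term into the $-\frac{\eta_t}{2}\|\nabla F\|^2$ and to keep the cumulative $c_2$ error below $\Delta_1$.

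Then I would telescope over $t = 1,\dots,T$: using $F(w^{(t+1)}_0) = F(\tilde w_t)$ and $F(\tilde w_T) \ge F^*$, the sum collapses to $\frac{1}{2}\sum_{t=1}^T \eta_t \|\nabla F(w^{(t)}_0)\|^2 \le \Delta_1 + (\text{cumulative }\sigma\text{ error}) \le 2\Delta_1$, so $\frac{\eta_T}{2}\sum_t \|\nabla F(w^{(t)}_0)\|^2 \le 2\Delta_1$ (using $\eta_t$ non-increasing), and $T \ge 8\Delta_1/(\eta_T\epsilon^2)$ gives the claimed $\frac{1}{T}\sum_t \|\nabla F(w^{(t)}_0)\|^2 \le \epsilon^2$. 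The one genuinely delicate point — and the main obstacle — is the induction that keeps $\|\nabla F(w^{(t)}_0)\| \le G$ for all $t$: the descent inequality above only guarantees $F(w^{(t)}_0) - F^* \le 2\Delta_1 = H$ for every $t$, and one must convert this function-value bound back into the gradient bound $\|\nabla F(w^{(t)}_0)\| \le G$ via the self-bounding property $u^2 \le 2\ell(2u)H$ that defines $G$ — this requires a separate elementary lemma (stated in Appendix \ref{appendix:lemmas}) relating $\|\nabla F(w)\|^2$ to $\ell$ and $F(w)-F^*$ for $\ell$-smooth functions. Unlike the random-reshuffling case, here the induction is clean because the function-value decrease is monotone along epochs, so no failure probability needs to be tracked; that is the whole reason the $\delta$ dependence vanishes and $H$ shrinks from $4\Delta_1/\delta$ to $2\Delta_1$.
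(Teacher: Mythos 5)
Your proposal follows essentially the same route as the paper's proof: establish local $L$-smoothness along each epoch by an induction on the iterate drift, use the deterministic per-epoch drift bound (with the $(3A+2)\|\nabla F\|^2+3\sigma^2$ factor) to get a pathwise descent inequality, absorb the gradient term via the stepsize conditions, telescope to get $F(w^{(t)}_0)-F^*\le 2\Delta_1$ for all $t$, and convert this back to $\|\nabla F(w^{(t)}_0)\|\le G$ via the self-bounding lemma, which is exactly why the $\delta$-dependence disappears. The only nitpicks are constant bookkeeping (the surviving coefficient after absorption is $\eta_t/4$, not $\eta_t/2$, matching the $8\Delta_1/(\eta_T\epsilon^2)$ requirement) and the loose phrase about monotone decrease — the function value can increase by $O(\eta_t^3L^2\sigma^2)$ per epoch; what matters is that the cumulative increase is deterministically bounded by $\Delta_1$ — neither of which is a genuine gap.
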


This theorem gives the convergence rate for arbitrary scheme in Algorithm \ref{alg:shuffle}. By choosing $\eta_t = \eta = \mathcal{O}\left(\sqrt[3]{\frac{1}{n^p T}}\right)=\mathcal{O}\left(\frac{\epsilon}{n^\frac{p}{2}}\right)$, we achieve a complexity of $\mathcal{O}\left(\frac{n^{\frac{p}{2}}}{\epsilon^3}\right)$ outer iterations and $\mathcal{O}\left(\frac{n^{\frac{p}{2}+1}}{\epsilon^3}\right)$ total gradient evaluations, ignoring constants. Without the randomness in $\pi$ in every iteration, the complexity's dependency on $n$ is increased by $\mathcal{O}(\sqrt{n})$. One possible stepsize is $\eta=\frac{\epsilon}{L\sqrt{2(3A+2)}}.$


\subsubsection{Strongly Convex Case}
\label{section:strongly}

For strongly convex case, we give results for both random reshuffling scheme and arbitrary scheme, with constant learning rate. Proof can be found in Appendix \ref{stronglyproof}.

\begin{assumption}
\label{assumption:strongcon}
    Function $F$ in (P) is $\mu$-strongly convex on $\mathrm{dom}(F)$.
\end{assumption}

\begin{theorem}
\label{theorem:stronglyconvexrr}
    Suppose Assumptions \ref{assumption:first}, \ref{assumption:lsmooth}, \ref{assumption:varianceexpectation} and \ref{assumption:strongcon} hold. Let $\{\tilde{w_t}\}_{t=1}^T$ be generated by Algorithm 1 with random reshuffling scheme. For any $0<\delta<1$,  we denote $H:=\max\{\frac{3\sigma^2}{4\mu}\log\frac{4}{\epsilon}+\Delta_1, \frac{4\Delta_1}{\delta}\}$, $G:=\sup\{u\geq 0|u^2\leq 2\ell(2u)\cdot H\}$, $G':=\sqrt{2(1+nA)}G+\sqrt{2n}\sigma$, $L:=\ell(2G')$. For any $\epsilon>0$, if we choose $\eta_t$ and $T$ such that
    $$\eta_t=\eta=\frac{4\log(\sqrt{n}T)}{\mu T}, T\geq 4\sqrt{\frac{\Delta_1}{n\delta\epsilon}},\frac{T}{\log(\sqrt{n}T)}\geq$$ 
    $$ \frac{4}{\mu}\max\left\{2,L\sqrt{2(3A+2)},L\sigma\sqrt{\frac{8}{n\mu\delta\epsilon}},\sqrt[3]{\frac{T\sigma^2 L^2}{n\Delta_1}}\right\},$$
    then for any $0<\delta<1$, with probability at least $1-\delta$ we have
    $$F(w^{(T+1)}_0)-F^*\leq \epsilon.$$
\end{theorem}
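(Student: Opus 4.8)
The plan is to run an induction over epochs $t$ that simultaneously maintains two invariants: (a) a ``stay-in-the-good-region'' bound of the form $F(w^{(t)}_0)-F^* \le H$ (which via the definition of $G$ forces $\|\nabla F(w^{(t)}_0)\|\le G$ and, through the chain $G\mapsto G'\mapsto L$, certifies that the iterates within epoch $t$ see an effective local smoothness constant $L=\ell(2G')$); and (b) a contraction-plus-noise recursion $F(w^{(t+1)}_0)-F^* \le (1-c\mu\eta)(F(w^{(t)}_0)-F^*) + (\text{epoch error})$. The first step is to invoke the per-epoch descent lemma that must be established in Appendix~\ref{stronglyproof} (the strongly convex analogue of the one behind Theorems~\ref{theorem:originalassumption}--\ref{theorem:nonconvexallscheme}): conditioned on $\|\nabla F(w^{(t)}_0)\|\le G$, all inner iterates $w^{(t)}_j$ stay inside the ball on which $\ell$-smoothness gives Lipschitz constant $L$, the accumulated inner drift $\|w^{(t)}_j-w^{(t)}_0\|$ is controlled by $\eta(\|\nabla F(w^{(t)}_0)\|+\sigma)$-type quantities (this is where $G'=\sqrt{2(1+nA)}G+\sqrt{2n}\sigma$ and Assumption~\ref{assumption:varianceexpectation} enter), and one gets a one-epoch inequality
\[
F(\tilde w_t)-F^* \le (1-\mu\eta)\bigl(F(\tilde w_{t-1})-F^*\bigr) + \eta^2\,O(L^2\sigma^2 n^{-1}\eta)\cdot(\text{variance term}),
\]
with the key gain that random reshuffling makes the leading noise term proportional to $\eta^3\sigma^2 L^2/n$ rather than $\eta^2\sigma^2$ — this is the mechanism behind the $\widetilde{\mathcal O}(n^{(p+1)/2}\epsilon^{-1/2})$ rate.

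Next I would unroll the recursion. With the constant stepsize $\eta=\frac{4\log(\sqrt n T)}{\mu T}$, the contraction factor satisfies $(1-\mu\eta)^T \le \exp(-\mu\eta T)=\exp(-4\log(\sqrt n T))=(\sqrt n T)^{-4}$, so the initial-gap term is driven below $\epsilon$ by the assumed lower bound $T\ge 4\sqrt{\Delta_1/(n\delta\epsilon)}$ together with the lower bound on $T/\log(\sqrt n T)$. Summing the geometric series of noise terms contributes something of order $\eta^2\sigma^2 L^2/(n\mu)$ times the per-epoch variance factor; the conditions $\frac{T}{\log(\sqrt n T)}\ge \frac{4}{\mu}L\sigma\sqrt{8/(n\mu\delta\epsilon)}$ and $\frac{T}{\log(\sqrt n T)}\ge \frac{4}{\mu}\sqrt[3]{T\sigma^2L^2/(n\Delta_1)}$ are precisely what is needed to push this aggregated noise below $\epsilon$ as well; the condition $\ge \frac{4}{\mu}\max\{2, L\sqrt{2(3A+2)}\}$ is the stepsize-smallness requirement $\eta\le \frac{1}{L\sqrt{2(3A+2)}}$ (and $\eta\mu\le 1$) needed for the descent lemma and the contraction to be valid. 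Collecting these gives $\mathbb E[F(\tilde w_T)-F^*\mid \mathcal E_T]\le \epsilon/2$ on the event $\mathcal E_T$ that the good region held throughout.

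The main obstacle — and the step I would spend the most care on — is closing the high-probability argument, i.e. controlling the ``bad event'' where some $\|\nabla F(w^{(t)}_0)\|$ exceeds $G$ and the local smoothness constant $L$ is no longer valid. The standard device here (as in \citet{li2023convex,xiandelving}) is a stopping-time / conditioning argument: define $\tau$ to be the first epoch at which the good region is violated, run the descent analysis on the stopped process $t\wedge\tau$, so that the one-epoch inequalities hold unconditionally for the stopped iterates, and then use the supermartingale-type bound together with the definition $H=\max\{\frac{3\sigma^2}{4\mu}\log\frac4\epsilon+\Delta_1,\ \frac{4\Delta_1}{\delta}\}$ and Markov's inequality to show $\Pr[\tau\le T]\le \delta$. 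The two branches of the max on $H$ play distinct roles: $\frac{4\Delta_1}{\delta}$ gives the crude Markov bound $\Pr[F(\tilde w_t)-F^* > \frac{4\Delta_1}{\delta}]\le \delta$ controlling the excursion, while $\frac{3\sigma^2}{4\mu}\log\frac4\epsilon+\Delta_1$ is the deterministic ceiling that the noise-plus-contraction recursion cannot exceed in expectation (the $\log\frac4\epsilon$ matching the $\log(\sqrt n T)$ scale once $T$ is of the prescribed order). On the complement, chaining back through $F(w^{(T+1)}_0)-F^*\le\epsilon/2$ and absorbing the $\delta$-probability slack yields the claim $F(w^{(T+1)}_0)-F^*\le\epsilon$ with probability at least $1-\delta$. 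A secondary technical point worth checking carefully is that $G$, $G'$, $L$ are well-defined finite constants under sub-quadratic $\ell$ (degree $p<2$): $u^2\le 2\ell(2u)H$ has a bounded solution set exactly because $\ell(2u)=o(u^2)$, and it is this $p<2$ that ultimately produces the $n^{(p+1)/2}$ exponent when the stepsize and $T$ conditions are translated into a complexity bound.
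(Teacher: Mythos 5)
Your overall architecture matches the paper's: a stopping time $\tau$ for the event $F(w^{(t)}_0)-F^*>H$, the per-epoch contraction $F(w^{(t+1)}_0)-F^*\le(1-\tfrac{\mu\eta}{2})(F(w^{(t)}_0)-F^*)+\tfrac{L^2\sigma^2\eta^3}{n}+A(t)$ with the reshuffling-induced $\eta^3\sigma^2L^2/n$ noise, an optional-stopping/supermartingale argument to kill $\mathbb{E}[\sum_{t<\tau}A(t)]$ and to get $\mathbb{P}(\tau\le T)\le\delta/2$ via Markov with the $\tfrac{4\Delta_1}{\delta}$ branch of $H$, and a final conditional bound on the good event. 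You also correctly identify why $G,G',L$ are finite for sub-quadratic $\ell$ and which stepsize conditions enforce $\eta\le\frac{1}{L\sqrt{2(3A+2)}}$.

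The genuine gap is in your last step. You aim for $\mathbb{E}[F(\tilde w_T)-F^*\mid\mathcal{E}_T]\le\epsilon/2$ and then claim the high-probability statement follows by ``absorbing the $\delta$-probability slack.'' It does not: the bound on the good event is only in expectation (the noise terms are random even conditioned on $\mathcal{E}_T$), so Markov at level $\epsilon$ from an $\epsilon/2$ expectation gives failure probability $1/2$, not $\delta/2$. The paper instead drives the (stopped-process) expectation down to order $\delta\epsilon$ --- this is precisely why $\delta$ sits inside the conditions $T\ge4\sqrt{\Delta_1/(n\delta\epsilon)}$ and $\frac{T}{\log(\sqrt nT)}\ge\frac{4}{\mu}L\sigma\sqrt{8/(n\mu\delta\epsilon)}$, which you instead read as pushing the noise below $\epsilon$ --- and then converts to the conditional expectation via $\mathbb{E}[F(w^{(\tau)}_0)-F^*]\ge\mathbb{P}(\tau=T{+}1)\,\mathbb{E}[F(w^{(T+1)}_0)-F^*\mid\tau=T{+}1]\ge\tfrac12\mathbb{E}[\cdot\mid\tau=T{+}1]$ before applying Markov at level $\epsilon$ to get $\delta/2$. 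Relatedly, the branch $\frac{3\sigma^2}{4\mu}\log\frac4\epsilon+\Delta_1$ of $H$ is not a ceiling on the expected recursion; its role (the paper's Lemma on the value of $H$) is to give a deterministic lower bound $\tau\ge\frac{2}{\mu\eta}\log\frac4\epsilon$, so that the term $\Delta_1\mathbb{E}[\exp(-\mu\eta\tau/2)]$ is still of order $\delta\epsilon$ even on the early-stopping event. Your plan is fixable with the stated parameter conditions, but as written the expectation-to-high-probability conversion and the handling of the initial-gap term on $\{\tau\le T\}$ do not close.
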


In Theorem \ref{theorem:stronglyconvexrr}, we can achieve a complexity of $\widetilde{\mathcal{O}}\left(n^{\frac{p-1}{2}}\epsilon^{-\frac{1}{2}}\right)$ outer iterations and $\widetilde{\mathcal{O}}\left(n^{\frac{p+1}{2}}\epsilon^{-\frac{1}{2}}\right)$ total gradient evaluations with $\eta=\widetilde{\mathcal{O}}\left(n^{\frac{1-p}{2}}\epsilon^{\frac{1}{2}}\right)$, ignoring constants. This matches the result in \citet{nguyen2021unified} with the same assumptions in the degenerate case of $p=0$. The dependence on $\delta$ is $T=\mathcal{O}(\delta^{-\frac{1}{2}-\frac{p}{2-p}}).$


It is not hard to follow proof of Theorem \ref{theorem:nonconvexallscheme} for arbitrary scheme in strongly convex case and achieve a complexity of $\widetilde{\mathcal{O}}\left(n^{\frac{p}{2}+1}\epsilon^{-\frac{1}{2}}\right)$ total gradient evaluations. Here we give a slightly stronger result where we remove Assumption \ref{assumption:varianceexpectation} to match the corresponding result in Lipschitz smooth case.

\begin{theorem}
\label{theorem:stronglyconvexany}
 Suppose Assumptions \ref{assumption:first}, \ref{assumption:lsmooth} and \ref{assumption:strongcon} hold. Let $\{\tilde{w_t}\}_{t=1}^T$ be generated by Algorithm 1 with arbitrary scheme. We denote $S=\{w|F(w)\leq F(w^{(1)}_0)\}$, $G'=\max_w\{\|\nabla f(w;i)\||w\in S, i\in[n]\}$, $L:=\ell(2G')$. For any $\epsilon>0$, choose $\eta_t$ and $T$ such that
 $$\eta_t=\eta=\frac{6\log(T)}{\mu T}\leq \frac{\Delta_1\mu^2}{9(\mu^2+L^2)\sigma^2_*},$$ 
 $$T=\widetilde{\mathcal{O}}(\epsilon^{-\frac12})\geq \frac{12L^2\log(T)}{\mu^2},$$
    where $\sigma_*$ is the standard deviation at $w_*$. Then we have $\|\nabla F(w^{(t)}_0)\|\leq G'$ and 
    $$F(w^{(T+1)}_0)-F^*\leq \epsilon.$$
\end{theorem}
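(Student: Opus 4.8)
\textbf{Proof proposal for Theorem \ref{theorem:stronglyconvexany}.}

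The plan is to run an induction on epochs showing that the iterates stay in the sublevel set $S$ (so the local Lipschitz constant $L=\ell(2G')$ is valid along the entire trajectory) while simultaneously tracking a one-epoch contraction of the function gap. First I would establish a per-epoch descent-type inequality: writing $w^{(t)}_0=\tilde w_{t-1}$ and unrolling the $n$ inner steps, the total update $\tilde w_t-\tilde w_{t-1}=-\frac{\eta}{n}\sum_{j=1}^n\nabla f(w^{(t)}_{j-1};\pi^{(t)}_j)$ can be compared against a clean gradient step $-\eta\nabla F(\tilde w_{t-1})$. The discrepancy is controlled by bounding $\|w^{(t)}_{j-1}-w^{(t)}_0\|$ (which is $O(\eta)$ times a sum of component gradient norms, all bounded by $G'$ on $S$) and then invoking $\ell$-smoothness with constant $L$ on the ball around $w^{(t)}_0$ to convert those displacements into gradient-difference bounds. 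Since Assumption \ref{assumption:varianceexpectation} is dropped here, I would instead bound $\|\nabla f(w;i)\|\le \|\nabla f(w;i)-\nabla f(w_*;i)\| + \|\nabla f(w_*;i)\|$, where the first term is $L$-Lipschitz-controlled and the second contributes the $\sigma_*^2$ term (the variance at the minimizer). This is exactly the standard "no-variance-assumption" shuffling trick from \citet{nguyen2021unified}, adapted to the local constant $L$.

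Next I would combine the descent inequality with $\mu$-strong convexity. Strong convexity gives $\|\nabla F(w)\|^2\ge 2\mu(F(w)-F^*)$ (PL inequality), so the one-epoch recursion takes the form
\[
F(\tilde w_t)-F^*\le (1-c\mu\eta)\bigl(F(\tilde w_{t-1})-F^*\bigr) + C(\mu^2+L^2)\sigma_*^2\,\eta^3,
\]
for absolute constants $c,C$, provided $\eta$ is small enough that the $O(L^2\eta^2)$ error terms are dominated — this is where the condition $\eta\le \Delta_1\mu^2/(9(\mu^2+L^2)\sigma_*^2)$ and $T\ge 12L^2\log(T)/\mu^2$ enter, the latter guaranteeing $\eta=6\log(T)/(\mu T)$ is below the $1/L$-type threshold. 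Unrolling this geometric recursion over $t=1,\dots,T$ with $\eta=6\log(T)/(\mu T)$ yields
\[
F(w^{(T+1)}_0)-F^*\le (1-c\mu\eta)^T\Delta_1 + \frac{C(\mu^2+L^2)\sigma_*^2\eta^3}{c\mu\eta}\lesssim T^{-6c}\Delta_1 + \frac{\log^2 T}{T^2}\cdot\frac{(\mu^2+L^2)\sigma_*^2}{\mu^4},
\]
and choosing the constants so the exponent on the first term exceeds $2$ makes the second term dominant; setting this $\le\epsilon$ forces $T=\widetilde{\mathcal O}(\epsilon^{-1/2})$, as claimed. In parallel, the descent inequality shows $F(\tilde w_t)\le F(\tilde w_{t-1})$ up to the $O(\eta^3)$ noise, and a slightly more careful bookkeeping (absorbing the noise into the $\Delta_1\mu^2/(9(\mu^2+L^2)\sigma_*^2)$ slack) keeps every $\tilde w_t\in S$, closing the induction and validating $\|\nabla F(w^{(t)}_0)\|\le G'$.

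The main obstacle I anticipate is the circularity between the Lipschitz constant and the trajectory: $L=\ell(2G')$ is only legitimate if all iterates — including the intermediate inner iterates $w^{(t)}_j$, not just the epoch endpoints — remain in a region where the gradient norm is at most $G'$. Resolving this requires the induction hypothesis to be strong enough to control the inner loop: one must show that within a single epoch the iterate cannot leave the slightly enlarged ball on which $\ell$-smoothness with constant $L$ holds, which needs the stepsize bound to beat the worst-case one-epoch drift $\frac{\eta}{n}\sum_j\|\nabla f(w^{(t)}_{j-1};\pi^{(t)}_j)\|$. This is the delicate quantitative heart of the argument; everything else is the now-standard strongly-convex shuffling recursion with $L$ replaced by the a-posteriori local constant. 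A secondary subtlety is that without Assumption \ref{assumption:varianceexpectation} one cannot relate $\sigma_*$ to a global variance, so the final bound must be stated in terms of $\sigma_*$ directly, which is why the theorem introduces it as a separate quantity.
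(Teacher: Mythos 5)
Your proposal is correct and follows essentially the same route as the paper: an epoch-level induction showing $F(\tilde w_t)\le F(w^{(1)}_0)$ so all iterates stay in the compact sublevel set $S$ (with the inner-loop drift controlled by the stepsize, as in the paper's Lemma \ref{Lemma:epsilon}), which licenses the local Lipschitz constant $L=\ell(2G')$ and lets the standard strongly convex shuffling recursion of \citet{nguyen2021unified} with $\rho=\mu/3$ and noise $(\mu^2+L^2)\sigma_*^2$ be unrolled with $\eta=6\log(T)/(\mu T)$ to get the $\widetilde{\mathcal{O}}(\epsilon^{-1/2})$ rate, the induction being closed exactly by the slack $\eta\le\Delta_1\mu^2/(9(\mu^2+L^2)\sigma_*^2)$. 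The only cosmetic difference is that you re-derive the per-epoch contraction while the paper simply invokes Theorem 1 of \citet{nguyen2021unified} once Lipschitz smoothness along the trajectory is established.
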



In Theorem \ref{theorem:stronglyconvexany}, we achieve a complexity of $\widetilde{\mathcal{O}}\left(\epsilon^{-1/2}\right)$ outer iterations and $\widetilde{\mathcal{O}}\left(n\epsilon^{-1/2}\right)$ total gradient evaluations with $\eta=\widetilde{\mathcal{O}}\left(n^{-1}\epsilon^{\frac{1}{2}}\right)$, ignoring constants. It should be noted that the constant $G'$ here is implicitly determined by constant $p$ and can potentially be large. Therefore, the complexity here cannot be directly compared with those in Theorem \ref{theorem:nonconvexallscheme} or \ref{theorem:stronglyconvexrr}.



\subsubsection{Non-strongly Convex Case}
\label{section:nonstrongly}

Next we consider the case where only non-strongly convexity are assumed. In the following theorem, we assume the optimal solution exists and denote one as $w_*$, the standard deviation at $w_*$ as $\sigma_*:=\sqrt{\frac{1}{n}\sum_{i=1}^n\|\nabla f(w_*;i)\|^2}$ and the average value of $\{w^{(t)}_0\}_{t=1}^T$ as $\bar{w}_{T}=\frac{1}{T}\sum_{t=1}^{T}w^{(t)}_0$. Proof for this section can be found in Appendix \ref{convexproof}.

\begin{assumption}
\label{assumption:nonstrongly}
    Functions $f(\cdot;i)$ in (P) are convex on$\mathrm{dom}(F)$, for all $i\in[n]$.
\end{assumption}

\begin{theorem}
\label{theorem:nonstrongly}
Suppose Assumptions \ref{assumption:first}, \ref{assumption:lsmooth}, \ref{assumption:varianceexpectation} and \ref{assumption:nonstrongly} hold. Let $\{\tilde{w_t}\}_{t=1}^T$ be generated by Algorithm 1 with random reshuffling scheme. For any $0<\delta<1$, define $H$, $G$, $G'$, $L$ as in Theorem \ref{theorem:originalassumption}. For any $\epsilon>0$, choose $\eta_t=\eta$ and $T$ such that
    $$\eta\leq \min \left\{\frac{1}{2L\sqrt{\frac{A}{n}+1}}, \sqrt[3]{\frac{n\Delta_1}{T\sigma^2 L^2}}, \sqrt[3]{\frac{3n\|w^{(1)}_0-w_*\|^2}{2LT\sigma_*^2}}\right\},$$
    $$T\geq \frac{4\|w^{(1)}_0-w_*\|^2}{\eta \delta\epsilon},$$
    then with probability at least $1-\delta$, we have $\|\nabla F(w^{(t)}_0)\|\leq G$ for every $1\leq t\leq T$ and $$F(\bar{w}_T)-F^*\leq \epsilon.$$
\end{theorem}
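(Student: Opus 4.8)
\textbf{Proof proposal for Theorem \ref{theorem:nonstrongly}.}

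The plan is to mirror the structure of the nonconvex analysis (Theorem \ref{theorem:originalassumption}) but to exploit convexity to control a distance-to-optimum potential $\|w^{(t)}_0-w_*\|^2$ rather than the function gap directly, and then convert via convexity and averaging. First, I would set up the induction that keeps all iterates inside the region where $\ell$-smoothness is effective: assuming $\|\nabla F(w^{(s)}_0)\|\le G$ for all $s\le t$, the definition of $G$, $G'$ and $L=\ell(2G')$ (exactly as in Theorem \ref{theorem:originalassumption}) guarantees that every inner iterate $w^{(t)}_j$ stays in a ball on which $\nabla f(\cdot;\pi^{(t)}_j)$ is $L$-Lipschitz; this is where Assumption \ref{assumption:varianceexpectation} enters, bounding the deviation of the running inner iterates from $w^{(t)}_0$ in terms of $\|\nabla F(w^{(t)}_0)\|$ and $\sigma$. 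The stepsize constraint $\eta\le \frac{1}{2L\sqrt{A/n+1}}$ is precisely what makes the one-epoch perturbation small enough to close this induction.

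Next, for the descent-type recursion I would write one full epoch as an inexact gradient step: $\tilde w_t = w^{(t)}_0 - \eta\,\nabla F(w^{(t)}_0) + \eta\, e_t$, where $e_t$ collects the shuffling/curvature error $\frac{1}{n}\sum_{j}(\nabla f(w^{(t)}_0;\pi^{(t)}_j)-\nabla f(w^{(t)}_{j-1};\pi^{(t)}_j))$. Expanding $\|\tilde w_t - w_*\|^2$ and using convexity of $F$ (via Assumption \ref{assumption:nonstrongly}, i.e. $\langle \nabla F(w^{(t)}_0), w^{(t)}_0 - w_*\rangle \ge F(w^{(t)}_0)-F^*$) yields
\[
\|\tilde w_t - w_*\|^2 \le \|w^{(t)}_0 - w_*\|^2 - 2\eta\big(F(w^{(t)}_0)-F^*\big) + (\text{error terms in }\eta^2, \eta^3).
\]
The error terms require bounding $\mathbb{E}\|e_t\|^2$; the standard shuffling estimate gives $\|e_t\|^2 = \mathcal{O}(L^2\eta^2)\cdot(\text{gradient}^2 + \sigma^2)$ per epoch, and for random reshuffling one gets the extra $1/n$ saving, producing the $\sqrt[3]{n\Delta_1/(T\sigma^2L^2)}$ and $\sqrt[3]{3n\|w^{(1)}_0-w_*\|^2/(2LT\sigma_*^2)}$ stepsize conditions. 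The term involving $\sigma_*$ appears because near $w_*$ the relevant variance proxy is the component-gradient norm at $w_*$, which controls the inner-loop drift on the final epochs; this is exactly the refinement that the $\sigma_*$-dependent stepsize bound is designed to handle.

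Then I would telescope over $t=1,\dots,T$: the distance terms cancel, leaving $\frac{1}{T}\sum_t (F(w^{(t)}_0)-F^*) \le \frac{\|w^{(1)}_0-w_*\|^2}{2\eta T} + (\text{accumulated errors})$, and Jensen's inequality applied to the convex $F$ gives $F(\bar w_T)-F^* \le \frac{1}{T}\sum_t(F(w^{(t)}_0)-F^*)$. Choosing $\eta$ and $T$ to satisfy the stated bounds drives the leading term to $\le \delta\epsilon/4$ and, together with a Markov-type argument, simultaneously certifies with probability at least $1-\delta$ that the gradient-norm bound $\|\nabla F(w^{(t)}_0)\|\le G$ persisted — closing the high-probability induction used in Step 1. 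I expect the main obstacle to be the interplay between the high-probability region argument and the error accumulation: one must show that the "bad event" $\{\exists t:\|\nabla F(w^{(t)}_0)\|>G\}$ has probability $\le\delta$ while simultaneously using the complementary event to validate the Lipschitz constant $L$ in the very estimates that bound the errors; this circularity is resolved by a careful stopping-time / induction-on-$t$ argument combined with a supermartingale or direct union bound, as in Theorem \ref{theorem:originalassumption}. The convexity-specific parts (Jensen, the $\langle\nabla F, w-w_*\rangle$ lower bound, the $\sigma_*$ bookkeeping) are comparatively routine once that framework is in place.
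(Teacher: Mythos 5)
Your proposal follows essentially the same route as the paper's proof: the stopping-time/high-probability framework inherited from Theorem \ref{theorem:originalassumption} (via Lemma \ref{Lemma:poftau}, $\mathbb{P}(\tau\leq T)\leq \delta/2$), a per-epoch expansion of $\|w^{(t)}_0-w_*\|^2$ using convexity (the paper invokes Lemma 7 of \citet{nguyen2021unified}, whose error term is written directly in terms of $\nabla f(w_*;\cdot)$ and whose expectation $\tfrac{n(n+1)\sigma_*^2}{6}$ produces the $\sigma_*$ stepsize condition in \emph{every} epoch, not just "near $w_*$" or in the final epochs as you suggest), optional stopping to decouple the conditioning on $t<\tau$, then telescoping, Jensen's inequality, and Markov plus a union bound. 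The only imprecision is your bookkeeping of how $\sigma_*$ enters, which does not change the overall argument.
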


By choosing $\eta = \mathcal{O}\left(\sqrt[3]{\frac{n^{1-p}}{T}}\right)=\mathcal{O}\left(n^\frac{1-p}{2}\epsilon^{0.5}\right)$, we achieve a complexity of $\mathcal{O}\left(\frac{n^{\frac{p-1}{2}}}{\epsilon^{1.5}}\right)$ outer iterations and $\mathcal{O}\left(\frac{n^{\frac{p+1}{2}}}{\epsilon^{1.5}}\right)$ total number of gradient evaluations, ignoring constants. The dependency on $\delta$ is $T=\mathcal{O}(\delta^{-\frac{3}{2}-\frac{p}{2-p}})$. If $\epsilon\leq 1/n$, one possible stepsize is $\eta=\frac{\sqrt{n\epsilon}}{2L\sqrt{\frac{A}{n}+1}}.$

Similarly, we can follow proof of Theorem \ref{theorem:nonconvexallscheme} for arbitrary scheme and achieve a complexity of $\mathcal{O}\left(\frac{n^{\frac{p}{2}+1}}{\epsilon^{1.5}}\right)$ total gradient evaluations. Now we give a result without variance assumption \ref{assumption:varianceexpectation}.

\begin{theorem}
\label{theorem:nonstronglyany}
    Suppose Assumptions \ref{assumption:first}, \ref{assumption:lsmooth} and \ref{assumption:nonstrongly} hold. Let $\{\tilde{w_t}\}_{t=1}^T$ be generated by Algorithm 1 arbitrary scheme. Define $S=\{w|F(w)\leq F(w^{(1)}_0)\}$, $G'=\max_w\{\|\nabla f(w;i)\||w\in S, i\in[n]\}<\infty$, $L=\ell(2G')$. For any $\epsilon>0$, choose $\eta_t=\eta$ and $T$ such that
    $$\eta\leq \frac{1}{G'}\sqrt\frac{3\epsilon}{2L}, T=\mathcal{O}(\epsilon^{-1.5})\geq \frac{\|w^{(1)}_0-w_*\|^2}{\eta \epsilon},$$
    then we have $\|\nabla F(w^{(t)}_0)\|\leq G$ for every $1\leq t\leq T$ and
    $$\min_{t\in[T]}F(w_T)-F^*\leq \epsilon.$$
\end{theorem}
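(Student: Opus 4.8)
\textbf{Proof proposal for Theorem \ref{theorem:nonstronglyany}.}

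The plan is to mimic the structure of the convex analysis (Theorem \ref{theorem:nonstrongly}) but exploit the fact that, in the arbitrary-scheme setting without a variance bound, we can work entirely inside the sublevel set $S=\{w:F(w)\leq F(w^{(1)}_0)\}$, where all component gradients are bounded by $G'$ and $F$ is $L$-smooth with $L=\ell(2G')$. First I would establish by induction on $t$ that every iterate — both the epoch endpoints $w^{(t)}_0$ and the inner iterates $w^{(t)}_j$ — stays in a neighborhood of $S$ on which the $\ell$-smoothness inequality of Definition \ref{def:gs} is active with constant $L$; this is what legitimizes using a uniform smoothness constant and gives $\|\nabla F(w^{(t)}_0)\|\le G'$. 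The key is that the stepsize bound $\eta\le \frac{1}{G'}\sqrt{\tfrac{3\epsilon}{2L}}$ makes each inner step $\frac{\eta}{n}\|\nabla f(w^{(t)}_{j-1};\pi^{(t)}_j)\|\le \frac{\eta G'}{n}$ small enough that the iterates never leave the ball on which the local Lipschitz constant is controlled, and a descent estimate shows $F(w^{(t)}_0)$ is non-increasing (up to the accumulated error), keeping the trajectory in $S$.

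Second, I would derive the one-epoch descent inequality. Writing $\tilde w_t - \tilde w_{t-1} = -\frac{\eta}{n}\sum_{j=1}^n \nabla f(w^{(t)}_{j-1};\pi^{(t)}_j)$ and applying the $L$-smoothness descent lemma to $F$ between $w^{(t)}_0$ and $w^{(t+1)}_0$, the cross term produces $-\eta\langle \nabla F(w^{(t)}_0), \frac1n\sum_j \nabla f(w^{(t)}_{j-1};\pi^{(t)}_j)\rangle$, which I split into $-\eta\|\nabla F(w^{(t)}_0)\|^2$ plus an inner-product remainder controlled by the inner-iterate drift $\|w^{(t)}_{j-1}-w^{(t)}_0\|\le \frac{\eta}{n}\sum_{k<j}\|\nabla f\|\le \eta G'$. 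Combining this with convexity of each $f(\cdot;i)$ — which gives $\langle \nabla F(w^{(t)}_0), w^{(t)}_0 - w_*\rangle \ge F(w^{(t)}_0)-F^*$ — and the standard telescoping trick on $\|w^{(t)}_0-w_*\|^2$, I expect to obtain, after choosing $\eta$ to absorb the $O(\eta^3 L G'^2)$-type error terms into $O(\epsilon)$ per epoch, a bound of the form
\begin{equation*}
\eta\sum_{t=1}^T \bigl(F(w^{(t)}_0)-F^*\bigr) \le \|w^{(1)}_0-w_*\|^2 + O(T\eta^3 L G'^2).
\end{equation*}
The stepsize constraint $\eta\le \frac{1}{G'}\sqrt{3\epsilon/(2L)}$ is precisely what forces the error term $\eta^2 L G'^2 \le \tfrac32\epsilon$, so the remainder per epoch is $O(\epsilon)$; dividing by $\eta T$ and using $T\ge \|w^{(1)}_0-w_*\|^2/(\eta\epsilon)$ then yields $\frac1T\sum_t (F(w^{(t)}_0)-F^*)\le O(\epsilon)$, hence $\min_{t\in[T]} F(w^{(t)}_0)-F^*\le \epsilon$ after adjusting constants.

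The main obstacle I anticipate is the inductive maintenance of the invariant that the whole trajectory stays in $S$ (or a fixed dilation of it), since the $\ell$-smoothness constant $L=\ell(2G')$ and the bound $G'$ are themselves defined in terms of $S$, so the argument is circular unless the induction is set up carefully: one must show that a single epoch starting in $S$ with the prescribed stepsize produces a next epoch-endpoint still in $S$, using the descent inequality to prove $F(\tilde w_t)\le F(\tilde w_{t-1})$ up to a controllable error, and simultaneously that none of the inner iterates $w^{(t)}_j$ escape the ball $\mathcal B\bigl(w^{(t)}_0, C/\ell(\|\nabla F(w^{(t)}_0)\|+C)\bigr)$ appearing in Definition \ref{def:gs}. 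Handling the $\theta$-free, purely-$w$ structure here is cleaner than in the random-reshuffling theorems because there is no high-probability bookkeeping, but getting the constants in the stepsize and in $T=\mathcal O(\epsilon^{-1.5})$ to line up with the stated bounds — in particular tracking that $\eta = \Theta(\sqrt{\epsilon})$ forces $T = \Theta(\|w^{(1)}_0-w_*\|^2/(\eta\epsilon)) = \Theta(\epsilon^{-1.5})$ — is the part that needs the most care. (Note also the statement's conclusion should read $\min_{t\in[T]} F(w^{(t)}_0)-F^*\le \epsilon$ rather than $F(w_T)$; I would prove it in that form.)
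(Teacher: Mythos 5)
Your second stage — the telescoping recursion on $\|w^{(t)}_0-w_*\|^2$ with a per-epoch error of order $\eta^3 L G'^2$, followed by averaging and the choice $T\ge \|w^{(1)}_0-w_*\|^2/(\eta\epsilon)$ — is essentially how the paper finishes, via inequality \eqref{ineq:convex} (Lemma 7 of Nguyen et al.), where the error is expressed through the component gradients at $w_*$ (which lies in $S$, hence the bound $G'$). The genuine gap is in your first stage, the maintenance of the invariant $w^{(t)}_0\in S$. You propose to keep the trajectory in $S$ by showing $F(\tilde w_t)$ is non-increasing ``up to the accumulated error.'' But without Assumption \ref{assumption:varianceexpectation} the per-epoch additive error in the descent estimate is of order $\eta^3 L^2 G'^2$, i.e.\ of order $\eta L\epsilon$ after imposing $\eta\le \frac{1}{G'}\sqrt{3\epsilon/(2L)}$; summed over $T=\Theta\bigl(1/(\eta\epsilon)\bigr)$ epochs this is of order $L\|w^{(1)}_0-w_*\|^2$, a constant, not $o(1)$. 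Approximate monotonicity of $F$ therefore cannot by itself prevent the iterates from drifting out of $S$, and the natural repair (bounding the suboptimality by $\|\nabla F(w^{(t)}_0)\|\,\|w^{(t)}_0-w_*\|$ when the gradient is small) presupposes exactly the distance control you have not yet established — so the induction does not close and the uniform constants $G'$ and $L$ are not justified.

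The paper closes the induction by a different mechanism, which is the missing idea: at each epoch it splits into two cases. Either $F(w^{(t-1)}_0)-F^*\le\epsilon$ already, in which case the theorem's conclusion holds and nothing further is needed; or $F(w^{(t-1)}_0)-F^*>\epsilon$, in which case the stepsize condition makes the negative term $-2\eta\bigl[F(w^{(t-1)}_0)-F^*\bigr]\le-2\eta\epsilon$ in \eqref{ineq:convex} dominate the error $\tfrac{2L\eta^3G'^2}{3}$, so $\|w^{(t)}_0-w_*\|$ is non-increasing and the iterate remains controlled in $S$. The whole proof is then a contradiction argument: if $F(w^{(t)}_0)-F^*>\epsilon$ for every $t\in[T]$, Lipschitz smoothness holds along the entire trajectory, summing \eqref{ineq:convex} and using the choices of $\eta$ and $T$ gives $\frac1T\sum_{t=1}^T\bigl(F(w^{(t)}_0)-F^*\bigr)\le\epsilon$, contradicting the standing hypothesis; hence some epoch has suboptimality at most $\epsilon$. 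In short, the invariant is driven by monotonicity of the distance to $w_*$ under the hypothesis ``not yet $\epsilon$-optimal,'' not by monotonicity of the function value. Your side remarks are correct: the $G$ in the statement should be $G'$, and the conclusion should read $\min_{t\in[T]}F(w^{(t)}_0)-F^*\le\epsilon$.
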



By choosing $\eta=\mathcal{O}(\sqrt[3]{\frac{1}{T}})$, we have the complexity of $\mathcal{O}(\frac{1}{\epsilon^{1.5}})$ outer iterations and $\mathcal{O}(\frac{n}{\epsilon^{1.5}})$ total number of gradient evaluations, ignoring constants. One possible stepsize is $\eta=\frac{\sqrt{3\epsilon}}{G'\sqrt{L}}.$





\subsection{Proof Sketch and Technical Novelty}

Broadly speaking, our approach involves two main goals: first, demonstrating that Lipschitz smoothness is maintained with high probability along the training trajectory $\{\tilde{w}_t\}$, and second, showing that, conditioned on Lipschitz smoothness, the summation of gradient norms is bounded with high probability. Here we slightly abuse the term 'Lipschitz' and 'Lipschitz smoothness' to refer to the property between neighboring steps along the training trajectory.

For the first goal, in Lemma \ref{Lemma:epsilon}, we prove by induction that when starting an iteration with a bounded gradient, the entire training trajectory during this iteration will have bounded gradients. Consequently, we only need to verify the Lipschitz smoothness condition at the start of each iteration. However, at this point, the two goals become intertwined. We need Lipschitz smoothness to bound the gradient differences, but we also need the gradient norm bounds to establish Lipschitz smoothness. Our solution is to address both issues simultaneously.

Assuming that, before a stopping time $\tau$, Lipschitz smoothness holds, we bound the gradient norm up to that time in Lemma \ref{Lemma:Ftau}. However, this process is nontrivial. Since we are examining behavior before a stopping time, every expectation is now conditioned on $t < \tau$, rendering all previous estimations for shuffling gradient algorithms inapplicable. This presents a contradiction: we want to condition on $t < \tau$ when applying Lipschitz smoothness, but we do not want this condition when estimating other quantities. In Lemma \ref{Lemma:Ftau}, we find a method to separately handle these two requirements, allowing us to achieve both goals simultaneously.



\subsection{Limitations and Future works}

Although we have proved upper bounds for the complexity of shuffling gradient algorithms, there are certain limitations in our work that we leave for future research:

\begin{itemize}
    \item First, as is common with many optimization algorithms, it is challenging to verify that the bounds presented are indeed the lower bounds. Future work could explore improving these results, for instance, by reducing the dependency on $\delta$ to a logarithmic factor, or by proving that the current bounds are, in fact, tight lower bounds.
    \item Second, although we showed results for arbitrary shuffling schemes, there are better results for single shuffling under Lipschitz smoothness, for example \citet{ahn2020sgd} proved $\mathcal{O}(\frac{1}{nT^2})$ convergence rate for strongly convex objectives. It is interesting to see whether we can achieve the same convergence rate with $\ell$-smoothness as well.
    \item The results in Theorem \ref{theorem:stronglyconvexany} and Theorem \ref{theorem:nonstronglyany} depends on constant $G'$ that can be potentially very large and hard to verify. In the absence of both Lipschitz smoothness and bounded variance, the behavior of gradients can be hard to track. We hope our results here can be a first step for future work.
    \item Lastly, shuffling gradient methods have been integrated with variance reduction techniques \citep{malinovsky2023random}. Exploring the performance of these algorithms under relaxed smoothness assumptions is another promising direction for future work.
\end{itemize}

\section{Numerical Experiments}
We compare reshuffling gradient algorithm (Algorithm \ref{alg:shuffle}) with SGD on multiple $\ell$-smooth optimization problems to prove its effectiveness. Experiments are conducted with different shuffling schemes, on convex, strongly convex and nonconvex objective functions, including synthetic functions, phase retrieval, distributionally robust optimization (DRO) and image classification. 
\subsection{Convex and Strongly Convex Settings}
We first consider convex functions $f_{i,k}(x)=x_i^4+kx_i$  of $x\in\mathbb{R}^{50}$ for all $(i,k)\in \mathcal{E}:=\{1,2,\ldots,50\}\times\{-10,-9,\ldots,9,10\}$, as well as their sample average $f(x)=\frac{1}{1050}\sum_{(k,i)\in\mathcal{E}}f_{i,k}(x)=\frac{1}{50}\sum_{i=1}^{50} x_i^4$. It can be easily verified that $f$ and all $f_{i,k}$ are convex but not strongly convex, and $\ell$-smooth (with $\ell(u)=3u^{2/3}$) but not Lipschitz-smooth. Then we compare reshuffling gradient algorithm (Algorithm \ref{alg:shuffle}) with SGD on the objective $\min_{x\in\mathbb{R}^{50}}f(x)$. Specifically, for each SGD update $x\leftarrow x-\eta\nabla f_{k,i}(x)$, $(k,i)\in\mathcal{E}$ is obtained uniformly at random. For Algorithm \ref{alg:shuffle}, we adopt three shuffling schemes as elaborated in Section \ref{sec:alg}. The fixed-shuffling scheme and shuffling-once fix all permutations $\pi^{(t)}$ respectively to be the natural sequence $(1,-10), (1,-9),\ldots (50,10)$ and its random permutation at the beginning, while the uniform-shuffling scheme obtains permutations $\pi^{(t)}$ uniformly at random and independently for all iterations $t$. We implement each algorithm 100 times with initialization $x_0=[1,\ldots,1]$ and fine-tuned stepsizes 0.01 (i.e., $\eta=0.01$ for SGD and $\frac{\eta_t}{n}=0.01$ for Algorithm \ref{alg:shuffle}), which takes around 3 minutes in total. We plot the learning curves of $f(x_t)$ averaged among the 100 times, as well as the 95\% and 5\% percentiles in the left of Figure \ref{fig_convex}, which shows that Algorithm \ref{alg:shuffle} with all shuffling schemes converges faster than SGD. 

Then we consider strongly convex functions $f_{j,k}(x)=\exp(x_j-k)+\exp(k-x_j)+\frac{1}{2}||x||^2$ for $(i,k)\in\mathcal{E}$ and their sample average below. 
\begin{align*}
    &f(x)=\frac{1}{1050}\sum_{(k,i)\in\mathcal{E}}f_{i,k}(x)=\frac{1}{2}||x||^2+\\
    &\frac{\exp(n+1)-\exp(-n)}{1050[\exp(1)-1]}\sum_{j=1}^{50} [\exp(x_j)+\exp(-x_j)].
\end{align*}
All these functions $f_{j,k}$ and $f$ are 1-strongly convex and $\ell$-smooth (with $\ell(u)=5u+5$) but not Lipschitz-smooth. We repeat the experiment in the same procedure above, except that all the stepsizes are fine-tuned to be $10^{-5}$. The result is shown in the right of Figure \ref{fig_convex}, which also shows that Algorithm \ref{alg:shuffle} with all shuffling schemes converges faster than SGD. 

\begin{figure}[t]
\begin{minipage}{.47\textwidth}
    \centering
    \includegraphics[width=.8\textwidth]{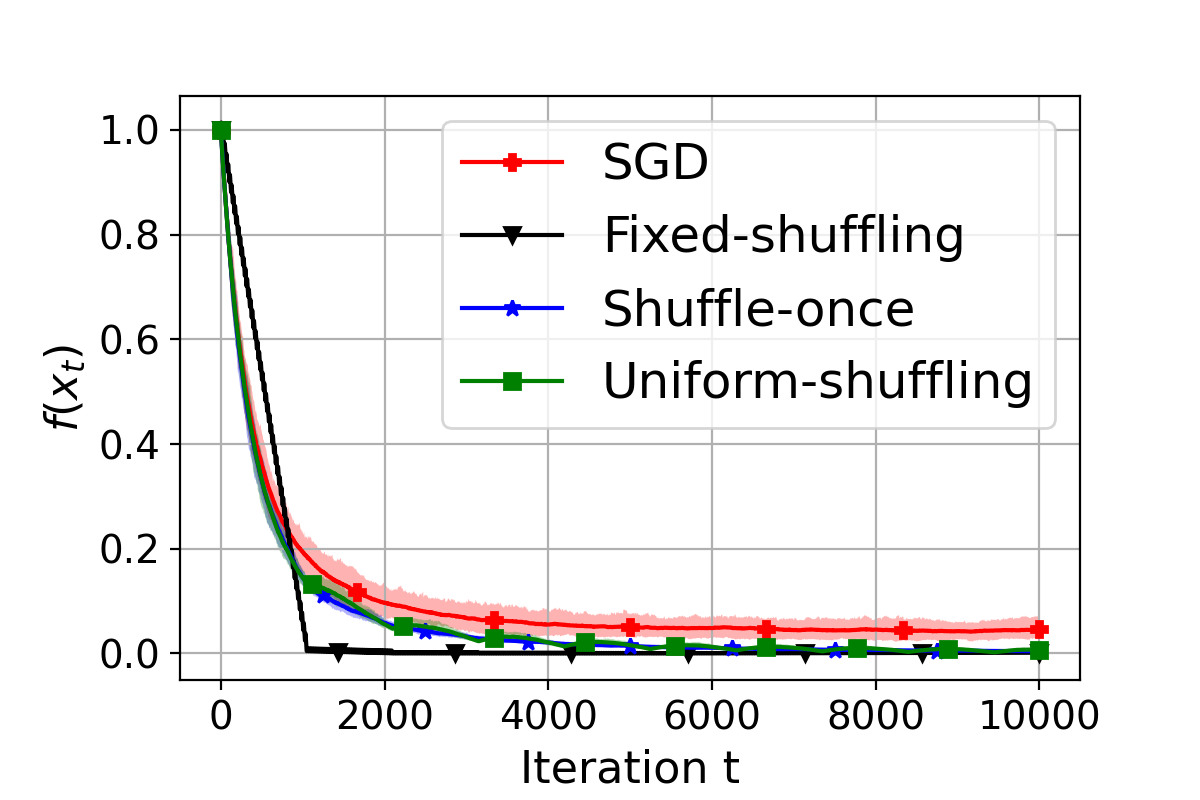}	
\end{minipage} 
\begin{minipage}{.47\textwidth}
    \centering
    \includegraphics[width=.8\textwidth]{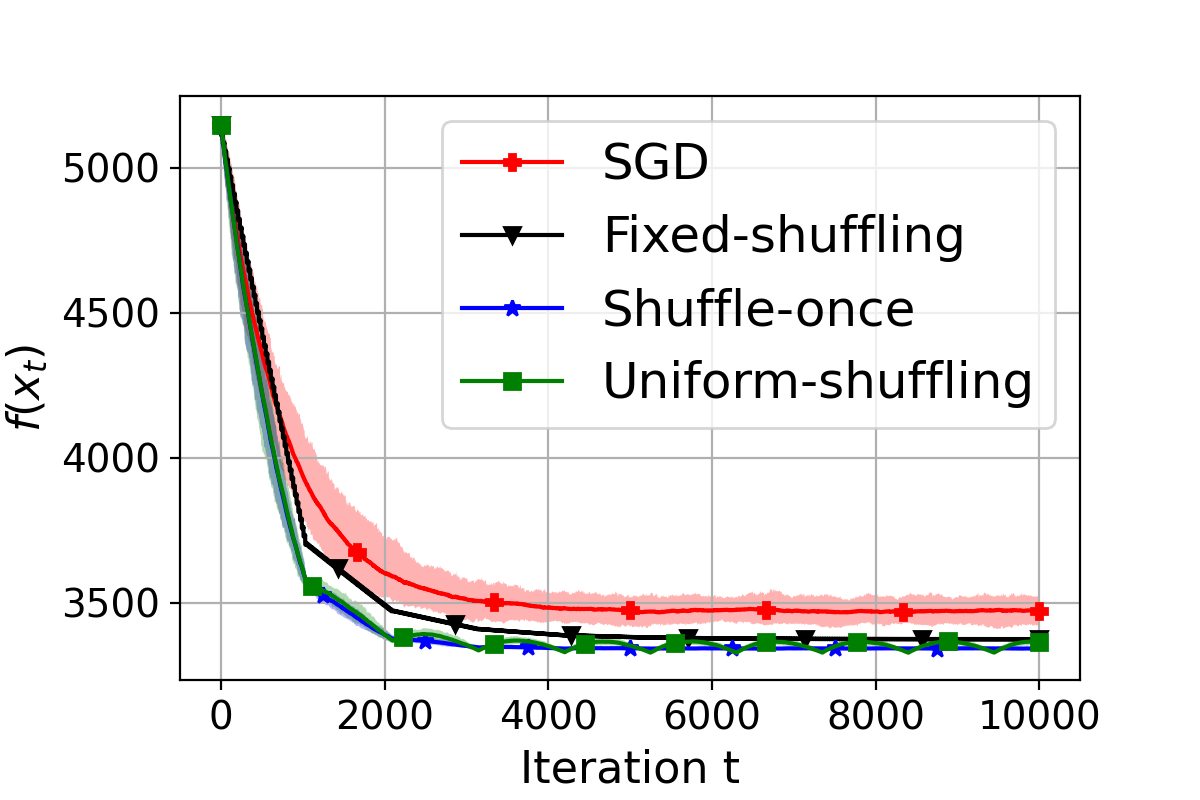}
\end{minipage} 
\caption{Experimental Results on Convex (up) and Strongly-convex (down) Objective Functions.}\label{fig_convex}
\vspace{-1pt}
\end{figure}

\subsection{Application to Phase Retrieval and DRO}
We compare SGD with Algorithm \ref{alg:shuffle} on phase retrieval and distributionally robust optimization (DRO), which are  $\ell$-smooth but not Lipschitz smooth. We use similar setup as in \citep{chen2023generalized}. 

In the phase retrieval problem \eqref{obj_phase}, we select $m=3000$ and $d=100$, and generate independent Gaussian variables $x, a_r\sim \mathcal{N}(0,0.5I_d)$, initialization $z_0\sim\mathcal{N}(5,0.5I_d)$, as well as $y_i=|a_r^{\top}z|^2+n_i$ with noise $n_i\sim\mathcal{N}(0,4^2)$ for $i=1,...,m$. We select constant stepsizes $2\times 10^{-6}$ and $\eta_j^{(t)}\equiv \frac{0.007}{m}$ for SGD and Algorithm \ref{alg:shuffle} respectively by fine-tuning and implement each algorithm 100 times. 
For Algorithm \ref{alg:shuffle}, we adopt three shuffling schemes as elaborated in Section \ref{sec:alg}. The fixed-shuffling scheme and shuffling-once fix all permutations $\pi^{(t)}$ respectively to be the natural sequence $1,2,\ldots,3000$ and its random permutation at the beginning, while the uniform-shuffling scheme obtains permutations $\pi^{(t)}$ uniformly at random and independently for all iterations $t$. We plot the learning curves of the objective function values averaged among the 100 times, as well as the 95\% and 5\% percentiles in the left of Figure \ref{fig_Phase_DRO}, which shows that Algorithm \ref{alg:shuffle} with shuffle-once and uniform-shuffling schemes converge faster than SGD. 
\begin{figure}[t]
\begin{minipage}{.47\textwidth}
    \centering
    \includegraphics[width=.7\textwidth]{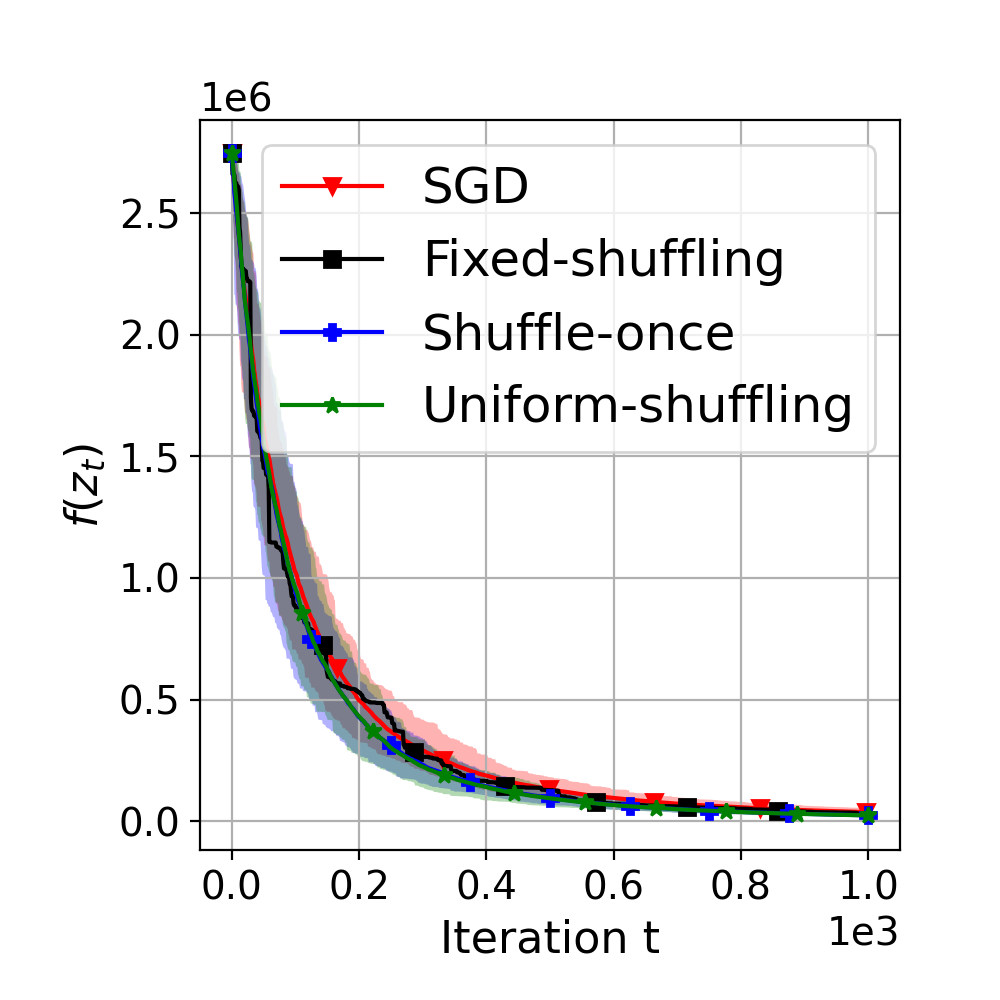}	
\end{minipage} 
\begin{minipage}{.47\textwidth}
    \centering
    \includegraphics[width=.7\textwidth]{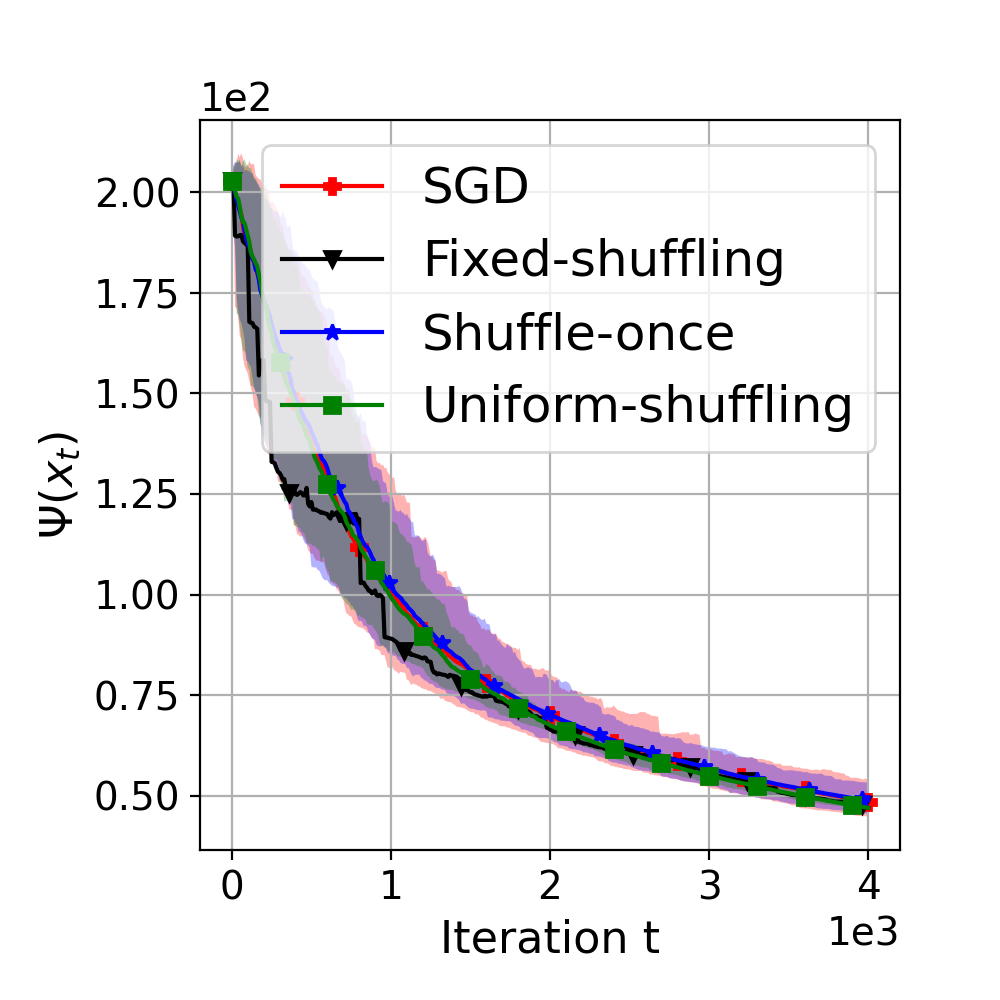}
\end{minipage} 
\caption{Experimental Results on Phase Retrieval (up) and DRO (down).}\label{fig_Phase_DRO}
\vspace{-1pt}
\end{figure}

In the DRO problem \eqref{eq:DRO}, we select $\lambda=0.01$ and $\psi^*(t)=\frac{1}{4}(t+2)_+^2-1$ (corresponding to $\psi$ being $\chi^2$ divergence). For the stochastic samples $\xi$, we use the life expectancy data\footnote{\url{https://www.kaggle.com/datasets/kumarajarshi/life-expectancy-who?resource=download}} designed for regression task between the life expectancy (target) and its factors (features) of 2413 people, and preprocess the data by filling the missing values with the median of the corresponding features, censorizing and normalizing all the features \nopagebreak\footnote{The detailed process of filling missing values and censorization: \url{https://thecleverprogrammer.com/2021/01/06/life-expectancy-analysis-with-python/}}, removing two categorical features (``country'' and ``status''), and adding standard Gaussian noise to the target to get robust model. We use the first 2000 samples $\{x_i,y_i\}_{i=1}^{2000}$ with features $x_i\in \mathbb{R}^{34}$ and targets $y_i\in \mathbb{R}$ for training. We use the loss function $\ell_{\xi}(w)=\frac{1}{2}(y_{\xi}-x_{\xi}^{\top}w)^2+0.1\sum_{j=1}^{34}\ln\big(1+|w^{(j)}|)\big)$ of $w=[w^{(1)};\ldots;w^{(34)}]\in\mathbb{R}^{34}$ for any sample $x_{\xi},y_{\xi}$. We use initialization $\eta_0=0.1$ and $w_0\in\mathbb{R}^{34}$ from standard Gaussian distribution. 

Then similar to phase retrieval, we implement both SGD and the three sampling schemes of Algorithm \ref{alg:shuffle} 100 times with stepsizes $\eta_j^{(t)}\!\!=\!\frac{\eta_t}{n}\!\!=\!10^{-7}$.
We evaluate $\Psi(x_t)\!:=\!\min_{\eta\in\mathbb{R}} L(x_t,\eta)$ every 10 iterations. The average, 5\% and 95\% percentiles of $\Psi(x_t)$ among the 100 implementations are plotted in the right of Figure \ref{fig_Phase_DRO}, which shows that Algorithm \ref{alg:shuffle} with fixed shuffling converges faster than SGD. 

\begin{figure}
\begin{minipage}{.5\textwidth}
    \centering
    \includegraphics[width=.8\textwidth]{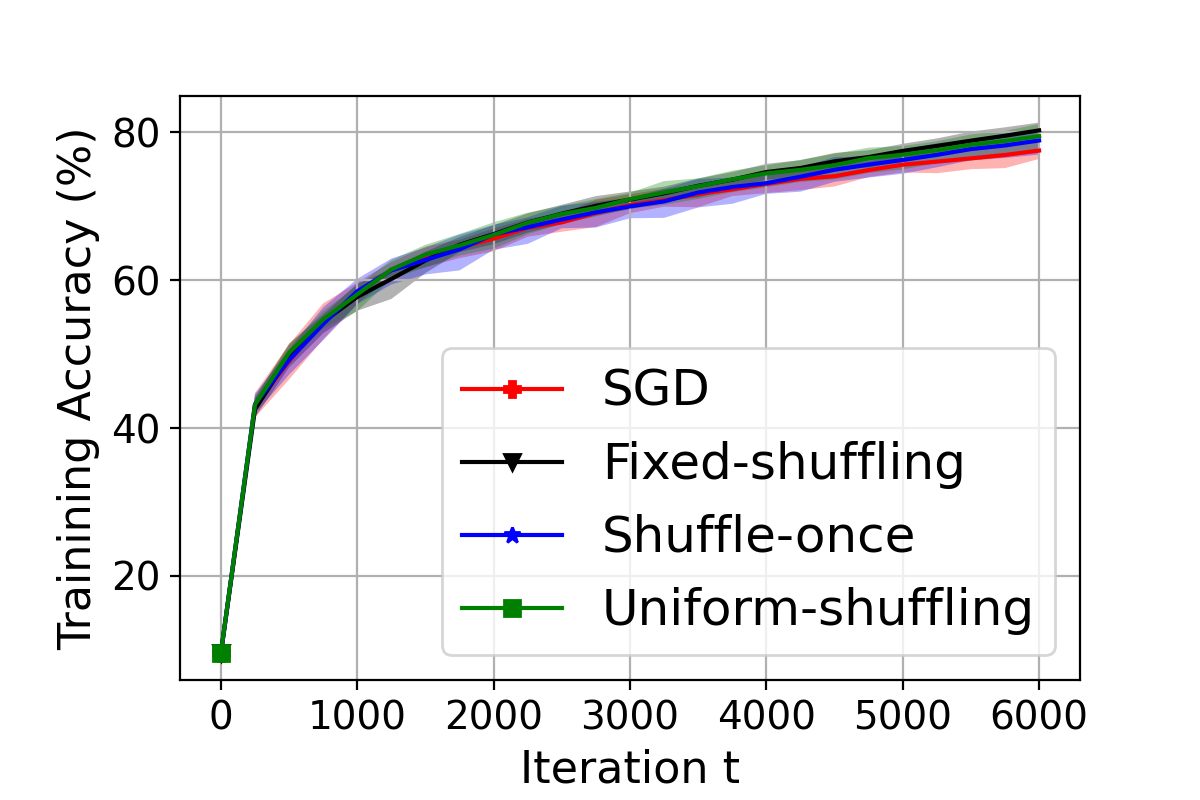}
\end{minipage} 
\begin{minipage}{.5\textwidth}
    \centering
    \includegraphics[width=.8\textwidth]{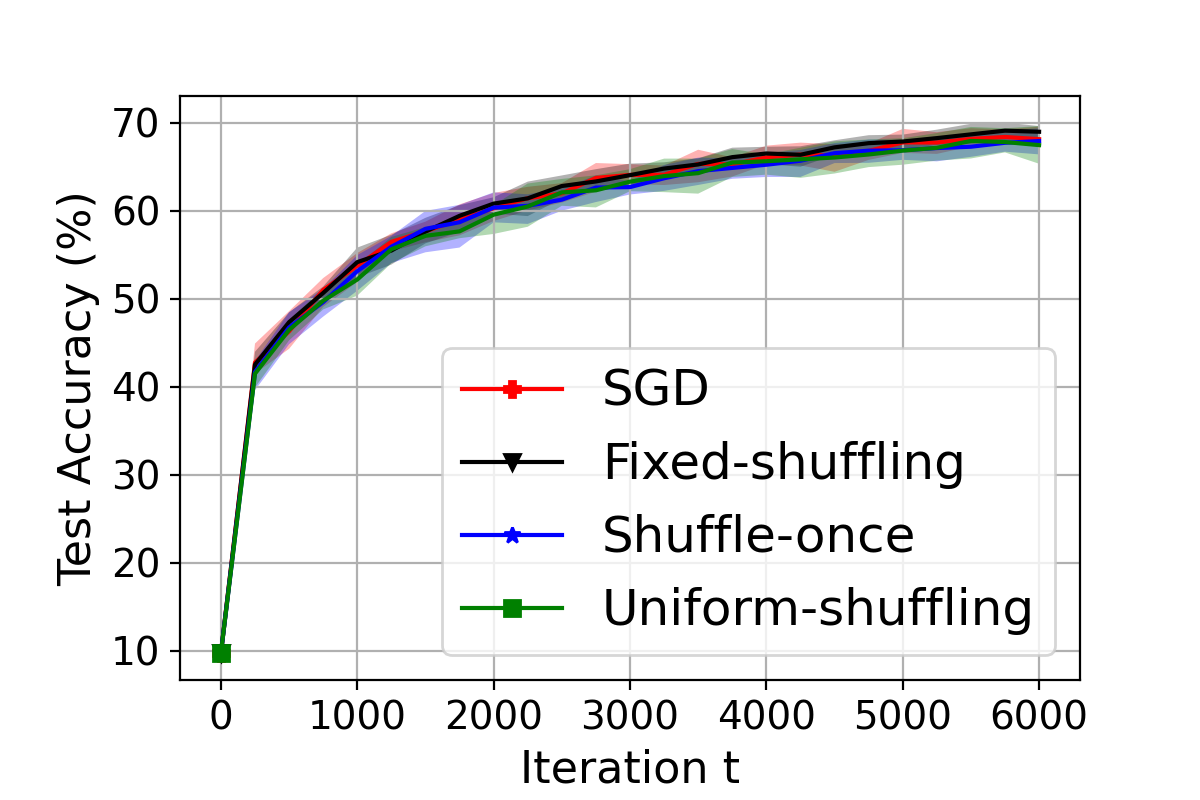}
\end{minipage} 
	\caption{Experimental Results on Cifar 10 Dataset.}
	\label{fig:Cifar10}
	\vspace{-2pt}
\end{figure}

\subsection{Application to Image Classification}
We train Resnet18 \citep{he2016deep} with cross-entropy loss for image classification task on Cifar 10 dataset \citep{krizhevsky2009learning}, using SGD and Algorithm \ref{alg:shuffle} with three shuffling schemes. We implement each algorithm 100 times with batchsize 200 and stepsize $10^{-3}$. After every 250 iterations, we evaluate the sample-average loss value as well as classification accuracy on the whole training dataset and test dataset. The average, 5\% and 95\% percentiles of these evaluated metrics among the 100 implementations are plotted in Figure \ref{fig:Cifar10}, which shows that Algorithm \ref{alg:shuffle} with fixed-shuffling scheme outperforms SGD on both training and test data, and Algorithm \ref{alg:shuffle} with the other two shuffling schemes outperforms SGD on training data.



\section{Conclusion}
We revisited the convergence of shuffling-type gradient algorithms under relaxed smoothness assumptions, 
establishing their convergence for nonconvex, strongly convex, and non-strongly convex settings. 
By introducing a more general smoothness condition, we demonstrated that these methods achieve 
competitive convergence rates without requiring Lipschitz smoothness, extending their applicability 
to a broader range of optimization problems. Our analysis covers both random reshuffling and arbitrary 
shuffling schemes, showing that properly chosen step sizes can ensure efficient convergence in both cases. 
Numerical experiments further validate our theoretical findings, demonstrating that shuffling-type methods 
outperform SGD in non-Lipschitz scenarios. These results provide a foundation for future work on 
shuffling-based optimization, including its integration with variance reduction techniques, possible tighter bounds in certain situations and its application 
to large-scale machine learning problems.


\section*{Impact Statement}

This paper presents work whose goal is to advance the field
of Machine Learning. There are many potential societal
consequences of our work, none which we feel must be
specifically highlighted here.

\section*{Acknowledgments}
This work was partially supported by NSF IIS 2347592, 2348169, DBI 2405416, CCF 2348306, CNS 2347617.

\bibliography{icml_2025}

\begin{thebibliography}{37}
\providecommand{\natexlab}[1]{#1}
\providecommand{\url}[1]{\texttt{#1}}
\expandafter\ifx\csname urlstyle\endcsname\relax
  \providecommand{\doi}[1]{doi: #1}\else
  \providecommand{\doi}{doi: \begingroup \urlstyle{rm}\Url}\fi

\bibitem[Ahn et~al.(2020)Ahn, Yun, and Sra]{ahn2020sgd}
K.~Ahn, C.~Yun, and S.~Sra.
\newblock Sgd with shuffling: optimal rates without component convexity and large epoch requirements.
\newblock \emph{Advances in Neural Information Processing Systems}, 33:\penalty0 17526--17535, 2020.

\bibitem[Arjevani et~al.(2023)Arjevani, Carmon, Duchi, Foster, Srebro, and Woodworth]{arjevani2023lower}
Y.~Arjevani, Y.~Carmon, J.~C. Duchi, D.~J. Foster, N.~Srebro, and B.~Woodworth.
\newblock Lower bounds for non-convex stochastic optimization.
\newblock \emph{Mathematical Programming}, 199\penalty0 (1):\penalty0 165--214, 2023.

\bibitem[Bottou(2009)]{Bo09}
L.~Bottou.
\newblock Curiously fast convergence of some stochastic gradient descent algorithms.
\newblock 2009.
\newblock URL \url{https://api.semanticscholar.org/CorpusID:16822133}.

\bibitem[Bottou(2012)]{Bo12}
L.~Bottou.
\newblock \emph{Stochastic Gradient Descent Tricks}, pages 421--436.
\newblock Springer Berlin Heidelberg, Berlin, Heidelberg, 2012.
\newblock ISBN 978-3-642-35289-8.
\newblock \doi{10.1007/978-3-642-35289-8_25}.
\newblock URL \url{https://doi.org/10.1007/978-3-642-35289-8_25}.

\bibitem[Chen et~al.(2023)Chen, Zhou, Liang, and Lu]{chen2023generalized}
Z.~Chen, Y.~Zhou, Y.~Liang, and Z.~Lu.
\newblock Generalized-smooth nonconvex optimization is as efficient as smooth nonconvex optimization.
\newblock \emph{arXiv preprint arXiv:2303.02854}, 2023.

\bibitem[Crawshaw et~al.(2022)Crawshaw, Liu, Orabona, Zhang, and Zhuang]{crawshaw2022robustness}
M.~Crawshaw, M.~Liu, F.~Orabona, W.~Zhang, and Z.~Zhuang.
\newblock Robustness to unbounded smoothness of generalized signsgd.
\newblock \emph{Advances in neural information processing systems}, 35:\penalty0 9955--9968, 2022.

\bibitem[De and Goldstein(2016)]{DeGo16}
S.~De and T.~Goldstein.
\newblock Efficient distributed {SGD} with variance reduction.
\newblock In F.~Bonchi, J.~Domingo{-}Ferrer, R.~Baeza{-}Yates, Z.~Zhou, and X.~Wu, editors, \emph{{IEEE} 16th International Conference on Data Mining, {ICDM} 2016, December 12-15, 2016, Barcelona, Spain}, pages 111--120. {IEEE} Computer Society, 2016.
\newblock \doi{10.1109/ICDM.2016.0022}.
\newblock URL \url{https://doi.org/10.1109/ICDM.2016.0022}.

\bibitem[Drenth(2007)]{drenth2007principles}
J.~Drenth.
\newblock \emph{Principles of protein X-ray crystallography}.
\newblock Springer Science \& Business Media, 2007.

\bibitem[d’Aspremont et~al.(2021)d’Aspremont, Scieur, Taylor, et~al.]{d2021acceleration}
A.~d’Aspremont, D.~Scieur, A.~Taylor, et~al.
\newblock Acceleration methods.
\newblock \emph{Foundations and Trends{\textregistered} in Optimization}, 5\penalty0 (1-2):\penalty0 1--245, 2021.

\bibitem[Feurer and Hutter(2019)]{feurer2019hyperparameter}
M.~Feurer and F.~Hutter.
\newblock Hyperparameter optimization.
\newblock \emph{Automated machine learning: Methods, systems, challenges}, pages 3--33, 2019.

\bibitem[Ghadimi and Lan(2013)]{ghadimi2013stochastic}
S.~Ghadimi and G.~Lan.
\newblock Stochastic first-and zeroth-order methods for nonconvex stochastic programming.
\newblock \emph{SIAM journal on optimization}, 23\penalty0 (4):\penalty0 2341--2368, 2013.

\bibitem[G{\"{u}}rb{\"{u}}zbalaban et~al.(2021)G{\"{u}}rb{\"{u}}zbalaban, Ozdaglar, and Parrilo]{GuOzPa21}
M.~G{\"{u}}rb{\"{u}}zbalaban, A.~E. Ozdaglar, and P.~A. Parrilo.
\newblock Why random reshuffling beats stochastic gradient descent.
\newblock \emph{Math. Program.}, 186\penalty0 (1):\penalty0 49--84, 2021.
\newblock \doi{10.1007/S10107-019-01440-W}.
\newblock URL \url{https://doi.org/10.1007/s10107-019-01440-w}.

\bibitem[HaoChen and Sra(2019)]{HaSu19}
J.~Z. HaoChen and S.~Sra.
\newblock Random shuffling beats {SGD} after finite epochs.
\newblock In K.~Chaudhuri and R.~Salakhutdinov, editors, \emph{Proceedings of the 36th International Conference on Machine Learning, {ICML} 2019, 9-15 June 2019, Long Beach, California, {USA}}, volume~97 of \emph{Proceedings of Machine Learning Research}, pages 2624--2633. {PMLR}, 2019.
\newblock URL \url{http://proceedings.mlr.press/v97/haochen19a.html}.

\bibitem[He et~al.(2016)He, Zhang, Ren, and Sun]{he2016deep}
K.~He, X.~Zhang, S.~Ren, and J.~Sun.
\newblock Deep residual learning for image recognition.
\newblock In \emph{Proceedings of the IEEE conference on computer vision and pattern recognition}, pages 770--778, 2016.

\bibitem[Jin et~al.(2021)Jin, Zhang, Wang, and Wang]{jin2021non}
J.~Jin, B.~Zhang, H.~Wang, and L.~Wang.
\newblock Non-convex distributionally robust optimization: Non-asymptotic analysis.
\newblock \emph{Advances in Neural Information Processing Systems}, 34:\penalty0 2771--2782, 2021.

\bibitem[Koloskova et~al.(2023)Koloskova, Doikov, Stich, and Jaggi]{koloskova2023shuffle}
A.~Koloskova, N.~Doikov, S.~U. Stich, and M.~Jaggi.
\newblock Shuffle sgd is always better than sgd: improved analysis of sgd with arbitrary data orders.
\newblock \emph{arXiv preprint arXiv:2305.19259}, 2023.

\bibitem[Krizhevsky(2009)]{krizhevsky2009learning}
A.~Krizhevsky.
\newblock Learning multiple layers of features from tiny images.
\newblock \emph{Master's thesis, University of Toronto}, 2009.

\bibitem[Levy et~al.(2020)Levy, Carmon, Duchi, and Sidford]{levy2020large}
D.~Levy, Y.~Carmon, J.~C. Duchi, and A.~Sidford.
\newblock Large-scale methods for distributionally robust optimization.
\newblock \emph{Advances in Neural Information Processing Systems}, 33:\penalty0 8847--8860, 2020.

\bibitem[Li et~al.(2023{\natexlab{a}})Li, Qian, Tian, Rakhlin, and Jadbabaie]{li2023convex}
H.~Li, J.~Qian, Y.~Tian, A.~Rakhlin, and A.~Jadbabaie.
\newblock Convex and non-convex optimization under generalized smoothness.
\newblock In \emph{Thirty-seventh Conference on Neural Information Processing Systems}, 2023{\natexlab{a}}.

\bibitem[Li et~al.(2023{\natexlab{b}})Li, Rakhlin, and Jadbabaie]{li2023convergence}
H.~Li, A.~Rakhlin, and A.~Jadbabaie.
\newblock Convergence of adam under relaxed assumptions, 2023{\natexlab{b}}.

\bibitem[Malinovsky et~al.(2023)Malinovsky, Sailanbayev, and Richt{\'a}rik]{malinovsky2023random}
G.~Malinovsky, A.~Sailanbayev, and P.~Richt{\'a}rik.
\newblock Random reshuffling with variance reduction: New analysis and better rates.
\newblock In \emph{Uncertainty in Artificial Intelligence}, pages 1347--1357. PMLR, 2023.

\bibitem[Miao et~al.(1999)Miao, Charalambous, Kirz, and Sayre]{miao1999extending}
J.~Miao, P.~Charalambous, J.~Kirz, and D.~Sayre.
\newblock Extending the methodology of x-ray crystallography to allow imaging of micrometre-sized non-crystalline specimens.
\newblock \emph{Nature}, 400\penalty0 (6742):\penalty0 342--344, 1999.

\bibitem[Mishchenko et~al.(2020)Mishchenko, Khaled, and Richt{\'a}rik]{mishchenko2020random}
K.~Mishchenko, A.~Khaled, and P.~Richt{\'a}rik.
\newblock Random reshuffling: Simple analysis with vast improvements.
\newblock \emph{Advances in Neural Information Processing Systems}, 33:\penalty0 17309--17320, 2020.

\bibitem[Nemirovskij and Yudin(1983)]{nemirovskij1983problem}
A.~S. Nemirovskij and D.~B. Yudin.
\newblock Problem complexity and method efficiency in optimization.
\newblock 1983.

\bibitem[Nesterov(2013)]{nesterov2013gradient}
Y.~Nesterov.
\newblock Gradient methods for minimizing composite functions.
\newblock \emph{Mathematical programming}, 140\penalty0 (1):\penalty0 125--161, 2013.

\bibitem[Nguyen et~al.(2021)Nguyen, Tran-Dinh, Phan, Nguyen, and Van~Dijk]{nguyen2021unified}
L.~M. Nguyen, Q.~Tran-Dinh, D.~T. Phan, P.~H. Nguyen, and M.~Van~Dijk.
\newblock A unified convergence analysis for shuffling-type gradient methods.
\newblock \emph{The Journal of Machine Learning Research}, 22\penalty0 (1):\penalty0 9397--9440, 2021.

\bibitem[Radford et~al.(2018)Radford, Narasimhan, Salimans, Sutskever, et~al.]{radford2018improving}
A.~Radford, K.~Narasimhan, T.~Salimans, I.~Sutskever, et~al.
\newblock Improving language understanding by generative pre-training.
\newblock 2018.

\bibitem[Reisizadeh et~al.(2023)Reisizadeh, Li, Das, and Jadbabaie]{reisizadeh2023variance}
A.~Reisizadeh, H.~Li, S.~Das, and A.~Jadbabaie.
\newblock Variance-reduced clipping for non-convex optimization.
\newblock \emph{arXiv preprint arXiv:2303.00883}, 2023.

\bibitem[Safran and Shamir(2020)]{SaSh20}
I.~Safran and O.~Shamir.
\newblock How good is {SGD} with random shuffling?
\newblock In J.~D. Abernethy and S.~Agarwal, editors, \emph{Conference on Learning Theory, {COLT} 2020, 9-12 July 2020, Virtual Event [Graz, Austria]}, volume 125 of \emph{Proceedings of Machine Learning Research}, pages 3250--3284. {PMLR}, 2020.
\newblock URL \url{http://proceedings.mlr.press/v125/safran20a.html}.

\bibitem[Shamir(2016)]{Sh16}
O.~Shamir.
\newblock Without-replacement sampling for stochastic gradient methods: Convergence results and application to distributed optimization.
\newblock \emph{CoRR}, abs/1603.00570, 2016.
\newblock URL \url{http://arxiv.org/abs/1603.00570}.

\bibitem[Sutton and Barto(2018)]{sutton2018reinforcement}
R.~S. Sutton and A.~G. Barto.
\newblock \emph{Reinforcement learning: An introduction}.
\newblock MIT press, 2018.

\bibitem[Wang et~al.(2022)Wang, Zhang, Zhang, Meng, Ma, Liu, and Chen]{wang2022provable}
B.~Wang, Y.~Zhang, H.~Zhang, Q.~Meng, Z.-M. Ma, T.-Y. Liu, and W.~Chen.
\newblock Provable adaptivity in adam.
\newblock \emph{arXiv preprint arXiv:2208.09900}, 2022.

\bibitem[Wang et~al.(2024)Wang, Zhang, Zhang, Meng, Sun, Ma, Liu, Luo, and Chen]{wang2024provable}
B.~Wang, Y.~Zhang, H.~Zhang, Q.~Meng, R.~Sun, Z.-M. Ma, T.-Y. Liu, Z.-Q. Luo, and W.~Chen.
\newblock Provable adaptivity of adam under non-uniform smoothness.
\newblock In \emph{Proceedings of the 30th ACM SIGKDD Conference on Knowledge Discovery and Data Mining}, pages 2960--2969, 2024.

\bibitem[Xian et~al.()Xian, Chen, and Huang]{xiandelving}
W.~Xian, Z.~Chen, and H.~Huang.
\newblock Delving into the convergence of generalized smooth minimax optimization.
\newblock In \emph{Forty-first International Conference on Machine Learning}.

\bibitem[Zhang et~al.(2020)Zhang, Jin, Fang, and Wang]{zhang2020improved}
B.~Zhang, J.~Jin, C.~Fang, and L.~Wang.
\newblock Improved analysis of clipping algorithms for non-convex optimization.
\newblock \emph{Advances in Neural Information Processing Systems}, 33:\penalty0 15511--15521, 2020.

\bibitem[Zhang et~al.(2019)Zhang, He, Sra, and Jadbabaie]{zhang2019gradient}
J.~Zhang, T.~He, S.~Sra, and A.~Jadbabaie.
\newblock Why gradient clipping accelerates training: A theoretical justification for adaptivity.
\newblock \emph{arXiv preprint arXiv:1905.11881}, 2019.

\bibitem[Zhang et~al.(2024)Zhang, Xiao, Ji, and Zou]{zhang2024convergence}
Q.~Zhang, P.~Xiao, K.~Ji, and S.~Zou.
\newblock On the convergence of multi-objective optimization under generalized smoothness.
\newblock \emph{arXiv preprint arXiv:2405.19440}, 2024.

\end{thebibliography}


\newpage
\onecolumn
\appendix

\section{Appendix / supplemental material}

\subsection{Nonconvex Case Analysis}

In this section we prove the theorems in section \ref{section:nonconvex}.

\subsubsection{Lemmas}
\label{appendix:lemmas}

In this part we use notations as defined in Theorem \ref{theorem:originalassumption}, for completeness we repeat them here:

$$H:=\frac{4\Delta_1}{\delta}, G:=\sup\{u\geq 0|u^2\leq 2\ell(2u)\cdot H\}<\infty,$$ $$G':=\sqrt{2(1+nA)}G+\sqrt{2n}\sigma, L:=\ell(2G').$$

We first state some lemmas that are useful in our proof. The following lemma is a natural corollary of Definition \ref{def:gs}, by the fact that $\ell$ is non-decreasing.




\begin{Lemma}
    \label{Lemma:rlsmooth}
    If $F$ is $\ell$-smooth, for any $w\in$ $\mathrm{dom}(F)$ satisfying $\|\nabla F(w)\|\leq G$, we have $\mathcal{B}(w,G/\ell(2G))\subseteq$$\mathrm{dom}(F)$. For any $w_1, w_2\in \mathcal{B}(w,G/\ell(2G)))$, 
    $$\|\nabla F(w_1)-\nabla F(w_2)\|\leq \ell(2G)\|w_1-w_2\|,$$ 
    $$ F(w_1)\leq F(w_2)+\langle \nabla F(w_2), w_1-w_2\rangle +\frac{\ell(2G)}{2}\|w_1-w_2\|^2.$$
\end{Lemma}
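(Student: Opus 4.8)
\textbf{Proof plan for Lemma \ref{Lemma:rlsmooth}.}

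The plan is to instantiate Definition \ref{def:gs} with the particular choice $C = G$ and then exploit monotonicity of $\ell$ to replace the argument $\|\nabla F(w)\| + C$ by the clean expression $2G$. First I would observe that since $\|\nabla F(w)\| \le G$ and $C = G$, we have $\|\nabla F(w)\| + C \le 2G$, so by the non-decreasing property of $\ell$, $\ell(\|\nabla F(w)\| + C) \le \ell(2G)$, and consequently $\frac{C}{\ell(\|\nabla F(w)\|+C)} = \frac{G}{\ell(\|\nabla F(w)\|+C)} \ge \frac{G}{\ell(2G)}$. This gives the ball inclusion $\mathcal{B}(w, G/\ell(2G)) \subseteq \mathcal{B}(w, \frac{C}{\ell(\|\nabla F(w)\|+C)}) \subseteq \mathrm{dom}(F)$ directly from the definition. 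Then for any $w_1, w_2 \in \mathcal{B}(w, G/\ell(2G))$, these points also lie in the larger ball $\mathcal{B}(w, \frac{C}{\ell(\|\nabla F(w)\|+C)})$, so Definition \ref{def:gs} yields $\|\nabla F(w_1) - \nabla F(w_2)\| \le \ell(\|\nabla F(w)\|+C)\|w_1 - w_2\| \le \ell(2G)\|w_1 - w_2\|$, using $\ell(\|\nabla F(w)\|+C) \le \ell(2G)$ once more.

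For the descent-type inequality, I would use the standard argument that a local Lipschitz gradient bound implies a local quadratic upper bound. Since the ball $\mathcal{B}(w, G/\ell(2G))$ is convex, the segment $\{w_2 + s(w_1 - w_2) : s \in [0,1]\}$ lies entirely within it, so I can write $F(w_1) - F(w_2) = \int_0^1 \langle \nabla F(w_2 + s(w_1-w_2)), w_1 - w_2\rangle \, ds$, subtract $\langle \nabla F(w_2), w_1 - w_2\rangle$, and bound the resulting integrand by $\|\nabla F(w_2 + s(w_1-w_2)) - \nabla F(w_2)\| \cdot \|w_1 - w_2\| \le \ell(2G) s \|w_1 - w_2\|^2$ using the gradient Lipschitz bound just established (both endpoints of the difference lie in the ball). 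Integrating $\int_0^1 s\, ds = 1/2$ gives the claimed $\frac{\ell(2G)}{2}\|w_1 - w_2\|^2$ bound.

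I do not anticipate a genuine obstacle here; the only point requiring a little care is making sure all the intermediate points $w_2 + s(w_1 - w_2)$ at which we evaluate $\nabla F$ actually lie in the region where the $\ell$-smoothness estimate applies — this is exactly why convexity of the Euclidean ball $\mathcal{B}(w, G/\ell(2G))$ is needed, and why we stated the Lipschitz bound for all pairs in that ball rather than just for $w_1, w_2$ themselves. One should also note $G < \infty$ (guaranteed by the definition of $G$ as a supremum of a set bounded because $\ell$ is sub-quadratic) so that $\ell(2G)$ is a finite positive constant and the radius $G/\ell(2G)$ is well-defined and positive.
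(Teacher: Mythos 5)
Your proposal is correct and follows exactly the route the paper intends: the paper gives no detailed proof, stating only that the lemma is a natural corollary of Definition \ref{def:gs} using the non-decreasing property of $\ell$, which is precisely your instantiation with $C=G$ plus the standard integral argument for the quadratic upper bound.
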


The following lemma gives relationship between $\|\nabla f(w;i)\|$ and $\|\nabla F(w)\|$.

\begin{Lemma}
   \label{Lemma:G'}
   If Assumption \ref{assumption:varianceexpectation} is true, we have 
   $$\|\nabla f(w;i)\|\leq \sqrt{2(1+nA)}\|\nabla F(w)\|+\sqrt{2n}\sigma.$$
\end{Lemma}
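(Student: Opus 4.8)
The plan is to reduce the claim to two elementary facts: that a single summand is at most $n$ times the average, and that $\|a\|^2\le 2\|a-b\|^2+2\|b\|^2$. Fix $i\in[n]$ and $w\in\mathrm{dom}(F)$. Since every term in the sum defining Assumption \ref{assumption:varianceexpectation} is nonnegative, we have
$$\|\nabla f(w;i)-\nabla F(w)\|^2 \;\le\; \sum_{j=1}^n\|\nabla f(w;j)-\nabla F(w)\|^2 \;=\; n\cdot\frac{1}{n}\sum_{j=1}^n\|\nabla f(w;j)-\nabla F(w)\|^2 \;\le\; n\bigl(A\|\nabla F(w)\|^2+\sigma^2\bigr).$$

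Next I would write $\nabla f(w;i)=\bigl(\nabla f(w;i)-\nabla F(w)\bigr)+\nabla F(w)$ and apply the bound $\|x+y\|^2\le 2\|x\|^2+2\|y\|^2$, which yields
$$\|\nabla f(w;i)\|^2 \;\le\; 2\|\nabla f(w;i)-\nabla F(w)\|^2 + 2\|\nabla F(w)\|^2 \;\le\; 2n\bigl(A\|\nabla F(w)\|^2+\sigma^2\bigr) + 2\|\nabla F(w)\|^2 \;=\; 2(1+nA)\|\nabla F(w)\|^2 + 2n\sigma^2.$$
Taking square roots and using the subadditivity $\sqrt{a+b}\le\sqrt{a}+\sqrt{b}$ for $a,b\ge 0$ gives
$$\|\nabla f(w;i)\| \;\le\; \sqrt{2(1+nA)\|\nabla F(w)\|^2 + 2n\sigma^2} \;\le\; \sqrt{2(1+nA)}\,\|\nabla F(w)\| + \sqrt{2n}\,\sigma,$$
which is the desired inequality, and it holds for every $i\in[n]$ and $w\in\mathrm{dom}(F)$.

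I do not anticipate a genuine obstacle here: the statement is a direct consequence of Assumption \ref{assumption:varianceexpectation} together with standard norm inequalities. The only minor choice is to use the quadratic split $\|x+y\|^2\le 2\|x\|^2+2\|y\|^2$ rather than the triangle inequality $\|x+y\|\le\|x\|+\|y\|$; the latter would give the slightly sharper constant $(\sqrt{nA}+1)$ in front of $\|\nabla F(w)\|$, but the form with $\sqrt{2(1+nA)}$ is the one used downstream (e.g. in the definition of $G'$ in Theorem \ref{theorem:originalassumption}), so I would match that.
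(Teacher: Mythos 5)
Your proof is correct and is essentially the paper's own argument: the paper likewise applies the split $\|\nabla f(w;i)\|^2\le 2\|\nabla f(w;i)-\nabla F(w)\|^2+2\|\nabla F(w)\|^2$, bounds the single deviation term by the full sum (i.e.\ $n$ times the average) so Assumption \ref{assumption:varianceexpectation} applies, and then takes square roots using $\sqrt{a+b}\le\sqrt{a}+\sqrt{b}$. The only difference is the order of the two elementary steps, which is immaterial.
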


\begin{proof}
    From Assumption \ref{assumption:varianceexpectation} we have that 
    \begin{align*}
    \|\nabla f(w;i)\|^2 & \leq 2\|\nabla f(w;i)-\nabla F(w)\|^2+2\|\nabla F(w)\|^2 \\
    & \leq 2\sum_{i=1}^n\|\nabla f(w;i)-\nabla F(w)\|^2+2\|\nabla F(w)\|^2 \\
    & \leq 2nA\|\nabla F(w)\|^2+2n\sigma^2+2\|\nabla F(w)\|^2 \\
    & = 2(1+nA)\|\nabla F(w)\|^2+2n\sigma^2.
    \end{align*}
    Taking square root on both sides and notice that $\|\nabla F(w)\|\geq 0$, $\sigma>0$ we have the conclusion.
\end{proof}

According to Lemma \ref{Lemma:G'}, for $w$ such that $\|\nabla F(w)\|\leq G$ is true, we have $$\|\nabla f(w;i)\|\leq \sqrt{2(1+nA)}G+\sqrt{2n}\sigma=G'$$ holds for all $i\in[n]$.

In our proof, we want that with high probability, $L$-Lipschitz smoothness in Lemma \ref{Lemma:rlsmooth}, for both $F(w)$ and $f(w;i)$, between $w^{(t)}_0$ and $w^{(t)}_j$ is true, for $t\in[T]$, $i, j\in[n]$. For that purpose, we can prove the following inequalities with high probability, for $t\in[T]$:
\begin{align}
    \|\nabla F(w^{(t)}_0)\|& \leq G; \notag \\
    \|w^{(t)}_0-w^{(t)}_n\|& \leq G'/\ell(G+G');\notag \\
    \|\nabla f(w^{(t)}_0;i)\| &\leq G', \forall i\in[n]; \notag \\
    \|w^{(t)}_0-w^{(t)}_j\|&\leq G'/\ell(2G'),\forall j\in[n].\label{eq:tobeproven}
\end{align}

By Lemma \ref{Lemma:G'} we know that the third inequality in (\ref{eq:tobeproven}) holds if the first inequality is true. Noticing that $\ell(G+G')\leq \ell(2G')$, it suffices to prove that, for $t\in[T]$,
\begin{equation} 
\label{eq:easiertoprove}
\|\nabla F(w^{(t)}_0)\| \leq G, \|w^{(t)}_0-w^{(t)}_j\|\leq G'/\ell(2G'), \forall j\in[n].
\end{equation}

For the first inequality, it can be hard to bound the gradient norm directly. The following lemma states the connection between gradient norm and function value of an $\ell$-smooth function. 

\begin{Lemma} (Lemma 3.5 in \citet{li2023convex})
\label{Lemma:gradienttodiff}
    If $F$ is $\ell$-smooth, then $$\|\nabla F(w)\|^2\leq 2\ell(2\|\nabla F(w)\|)\cdot (F(w)-F^*)$$ for any $w\in$$\mathrm{dom}(F)$.
\end{Lemma}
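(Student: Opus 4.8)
\textbf{Proof proposal for Lemma \ref{Lemma:gradienttodiff}.}

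The plan is to exploit the descent-type inequality that $\ell$-smoothness provides on a ball around $w$ and then optimize over a carefully chosen point inside that ball. First I would invoke the local smoothness from Definition \ref{def:gs} (equivalently Lemma \ref{Lemma:rlsmooth} specialized to a general point): for any $C>0$ the ball $\mathcal{B}\bigl(w,\frac{C}{\ell(\|\nabla F(w)\|+C)}\bigr)$ lies in $\mathrm{dom}(F)$, and on that ball $\nabla F$ is Lipschitz with constant $\ell(\|\nabla F(w)\|+C)$. Consequently, for any $w'$ in that ball, the quadratic upper bound $F(w')\leq F(w)+\langle \nabla F(w), w'-w\rangle + \frac{\ell(\|\nabla F(w)\|+C)}{2}\|w'-w\|^2$ holds. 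The natural choice is to take the gradient-descent point $w' = w - s\,\nabla F(w)$ for a step $s>0$ small enough that $w'$ stays in the ball, i.e. $s\|\nabla F(w)\| \le \frac{C}{\ell(\|\nabla F(w)\|+C)}$.

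With that substitution the bound becomes $F^* \le F(w') \le F(w) - \bigl(s - \frac{s^2}{2}\ell(\|\nabla F(w)\|+C)\bigr)\|\nabla F(w)\|^2$, so rearranging gives $\|\nabla F(w)\|^2 \le \frac{F(w)-F^*}{s - \frac{s^2}{2}\ell(\|\nabla F(w)\|+C)}$. The next step is to pick $s$ and $C$ to make the denominator as large as possible subject to the feasibility constraint $s\|\nabla F(w)\| \le \frac{C}{\ell(\|\nabla F(w)\|+C)}$. Choosing $C = \|\nabla F(w)\|$ turns $\ell(\|\nabla F(w)\|+C)$ into $\ell(2\|\nabla F(w)\|)$, and then picking $s$ at (or a constant fraction of) the feasibility boundary $s = \frac{1}{\ell(2\|\nabla F(w)\|)}$ — one checks $s\|\nabla F(w)\| = \frac{\|\nabla F(w)\|}{\ell(2\|\nabla F(w)\|)} \le \frac{C}{\ell(2\|\nabla F(w)\|)}$ holds with equality — makes the denominator $s - \frac{s^2}{2}\ell(2\|\nabla F(w)\|) = \frac{1}{2\ell(2\|\nabla F(w)\|)}$. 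Substituting back yields exactly $\|\nabla F(w)\|^2 \le 2\ell(2\|\nabla F(w)\|)\,(F(w)-F^*)$.

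The main obstacle — really the only subtle point — is making sure the gradient-descent iterate $w'$ actually lies inside the ball on which the smoothness estimate is valid, since the ball radius itself depends on $\|\nabla F(w)\|$ through $\ell$; this is what forces the specific coupling $C=\|\nabla F(w)\|$ and $s=1/\ell(2\|\nabla F(w)\|)$ rather than letting $s\to\infty$. One also has to handle the degenerate case $\nabla F(w)=0$ separately, but there the claimed inequality $0\le 2\ell(0)(F(w)-F^*)$ is immediate since $\ell>0$ and $F(w)\ge F^*$. Since this is quoted as Lemma 3.5 of \citet{li2023convex}, I would cite it directly rather than reproving it, but the argument above is the natural self-contained route.
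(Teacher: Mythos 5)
Your argument is correct, and it is essentially the standard derivation: the paper itself gives no proof for this lemma, quoting it directly from Lemma 3.5 of \citet{li2023convex}, and your coupling $C=\|\nabla F(w)\|$, $s=1/\ell(2\|\nabla F(w)\|)$ in the local descent inequality is exactly the route that reference takes to obtain the factor $2\ell(2\|\nabla F(w)\|)$. The only cosmetic point is that with $s$ chosen exactly at the feasibility boundary the point $w-s\nabla F(w)$ sits on the boundary of the ball from Definition \ref{def:gs}; taking $s=(1-\delta)/\ell(2\|\nabla F(w)\|)$ and letting $\delta\to 0$ (or invoking your ``constant fraction'' remark) removes any ambiguity about open versus closed balls, and your separate treatment of $\nabla F(w)=0$ closes the degenerate case.
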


Since $\ell$ is sub-quadratic, with Lemma \ref{Lemma:gradienttodiff} we can bound the gradient norm by bounding the difference between the function value and the optimal value. To ease the proof, let us define the following stopping time:
\begin{align*}
    \tau := \min\{t|F(w^{(t)}_0)-F^*>H\}\wedge (T+1).
\end{align*}

For $t<\tau$, we have $\|\nabla F(w^{(t)}_0)\|\leq G$ based on the definition of $\tau$ and Lemma \ref{Lemma:gradienttodiff}, so the first inequality in (\ref{eq:easiertoprove}) is satisfied. The following lemma proves that the other inequality in (\ref{eq:easiertoprove}) is true for $t<\tau$ as well, therefore guarantees the Lipschitz smoothness before $\tau$.

\begin{Lemma}
\label{Lemma:epsilon}
For $t<\tau$, $\eta_t \leq \frac{1}{2L}$, we have for all $k\in[n]$ and $t\in[T]$, $\|w^{(t)}_0-w^{(t)}_k\|^2\leq G'/\ell(2G').$
\end{Lemma}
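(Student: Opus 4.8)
The statement to prove is Lemma~\ref{Lemma:epsilon}: for every epoch $t<\tau$ and stepsize $\eta_t\le \frac{1}{2L}$, the iterates satisfy $\|w^{(t)}_0-w^{(t)}_k\|^2\le G'/\ell(2G')$ for all $k\in[n]$.

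\textbf{Plan.} I would fix an epoch $t<\tau$ and argue by induction on $k$. The base case $k=0$ is trivial since $w^{(t)}_0-w^{(t)}_0=0$. For the inductive step, suppose $\|w^{(t)}_0-w^{(t)}_j\|\le G'/\ell(2G')$ holds for all $j\le k-1$. Since $t<\tau$, the definition of $\tau$ together with Lemma~\ref{Lemma:gradienttodiff} gives $\|\nabla F(w^{(t)}_0)\|\le G$, and then Lemma~\ref{Lemma:G'} yields $\|\nabla f(w^{(t)}_0;i)\|\le G'$ for all $i$. The inductive hypothesis places $w^{(t)}_0,\dots,w^{(t)}_{k-1}$ all inside the ball $\mathcal{B}(w^{(t)}_0, G'/\ell(2G'))$, so Lemma~\ref{Lemma:rlsmooth} (applied to each component $f(\cdot;i)$, whose radius of validity is $\ge G'/\ell(2G')$ once its gradient at $w^{(t)}_0$ is $\le G'$) gives Lipschitz continuity of each $\nabla f(\cdot;\pi^{(t)}_j)$ on that ball with constant $\ell(2G')=L$. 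Hence $\|\nabla f(w^{(t)}_{j-1};\pi^{(t)}_j)\| \le \|\nabla f(w^{(t)}_0;\pi^{(t)}_j)\| + L\|w^{(t)}_{j-1}-w^{(t)}_0\| \le G' + L\cdot \frac{G'}{L} = 2G'$ for each $j\le k$.

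\textbf{Telescoping the update.} From the update rule $w^{(t)}_j = w^{(t)}_{j-1} - \frac{\eta_t}{n}\nabla f(w^{(t)}_{j-1};\pi^{(t)}_j)$, summing over $j=1,\dots,k$ gives
\[
\|w^{(t)}_0 - w^{(t)}_k\| \;\le\; \frac{\eta_t}{n}\sum_{j=1}^{k}\|\nabla f(w^{(t)}_{j-1};\pi^{(t)}_j)\| \;\le\; \frac{\eta_t}{n}\cdot k\cdot 2G' \;\le\; 2\eta_t G'.
\]
Using $\eta_t\le \frac{1}{2L}$ this is at most $\frac{G'}{L} = \frac{G'}{\ell(2G')}$, which closes the induction and in fact gives the slightly stronger bound $\|w^{(t)}_0-w^{(t)}_k\|\le G'/\ell(2G')$, hence the squared bound claimed. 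One subtlety to handle carefully: the bound $\|\nabla f(w^{(t)}_{j-1};\pi^{(t)}_j)\|\le 2G'$ for the index $j=k$ relies on $\|w^{(t)}_{k-1}-w^{(t)}_0\|\le G'/L$, which is exactly the inductive hypothesis at $k-1$, so the logic is consistent; and the ball membership needed to invoke $\ell$-smoothness between $w^{(t)}_0$ and $w^{(t)}_{j-1}$ also follows from the inductive hypothesis.

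\textbf{Main obstacle.} The delicate point is the circularity between "the iterates stay in the ball where $\ell$-smoothness is usable" and "$\ell$-smoothness is what keeps the gradients — hence the steps — small enough to stay in the ball." The induction on $k$ is precisely the device that breaks this circularity: at step $k$ we only ever invoke Lipschitz bounds on the segment up to $w^{(t)}_{k-1}$, which the inductive hypothesis already controls, so we never need the conclusion about $w^{(t)}_k$ before we prove it. A secondary technical care is matching the radius in Definition~\ref{def:gs}: we must choose the constant $C$ in that definition so that $C/\ell(\|\nabla f(w^{(t)}_0;i)\| + C) \ge G'/\ell(2G')$; taking $C = G'$ works because $\|\nabla f(w^{(t)}_0;i)\|\le G'$ and $\ell$ is non-decreasing, so $C/\ell(\|\nabla f(w^{(t)}_0;i)\|+C) \ge G'/\ell(2G')$. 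This is already packaged in Lemma~\ref{Lemma:rlsmooth}, so invoking that lemma suffices.
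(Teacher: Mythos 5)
Your proposal is correct and follows essentially the same route as the paper's own proof: induction on $k$ within an epoch, using $t<\tau$ (via Lemma~\ref{Lemma:gradienttodiff} and Lemma~\ref{Lemma:G'}) to get $\|\nabla f(w^{(t)}_0;i)\|\le G'$, then the inductive ball membership plus Lemma~\ref{Lemma:rlsmooth} to bound intermediate gradients by $2G'$, and finally telescoping the update with $\eta_t\le\frac{1}{2L}$ and $k\le n$. The only difference is cosmetic: you make explicit the choice $C=G'$ in Definition~\ref{def:gs} and the source of the bound $\|\nabla f(w^{(t)}_0;i)\|\le G'$, which the paper establishes in the surrounding discussion rather than inside the proof.
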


\begin{proof}
    We use induction to prove that 
\[
w_j^{(t)} \in \mathcal{B}(w_0^{(t)}, \frac{G'}{\ell(2G')}), j = 0, 1, \ldots, n.
\]
First of all, this claim is true for \( j = 0 \). Now suppose the claim is true for $j\leq k-1$, i.e., 
\[
\|w_0^{(t)} - w_j^{(t)}\| \leq \frac{G'}{\ell(2G')}, j = 0, 1, \ldots, k-1,
\]
we try to prove it for \( w_k^{(t)} \). From Lemma \ref{Lemma:rlsmooth}, we have Lipschitz smoothness, for all $f(w;i)$, between $w^{(t)}_0$ and $w^{(t)}_j$, if $j\leq k-1$.

Since we have  
\[
\|\nabla f(w_0^{(t)};i)\| \leq G', \, \forall i \in [n],
\]
for any \( i \in [n] \) and \( j \in [k-1] \) we have
\[
\|\nabla f(w_j^{(t)};i)\| \leq \|\nabla f(w_0^{(t)};i)\| + \|\nabla f(w_j^{(t)};i) - \nabla f(w_0^{(t)};i)\| \leq G' + L \|w_j^{(t)} - w_0^{(t)}\| \leq 2G'.
\]
Hence, by the algorithm design we have
\[
\|w_k^{(t)} - w_0^{(t)}\| = \left\|\sum_{j=0}^{k-1} \frac{\eta_t}{n} \nabla f(w_j^{(t)}; \pi_j^{(t)})\right\| \leq \sum_{j=0}^{k-1} \frac{\eta_t}{n} \|\nabla f(w_j^{(t)}; \pi_j^{(t)})\| \leq \sum_{j=0}^{k-1} \frac{2G' \eta_t}{n} \leq \frac{G'}{L} = \frac{G'}{\ell(2G')},
\]
where the third inequality uses \( k \leq n \) and \( \eta_t \leq \frac{1}{2L}.
\) By induction, the claim is true.
\end{proof}




Therefore, we have the desired Lipschitz smoothness property in Lemma \ref{Lemma:rlsmooth} for $t<\tau$. Our next target is to bound $\mathbb{P}(\tau\leq T).$

To simplify the notations, let us define
$$\epsilon^{(t)}_k:=\frac{1}{n}\sum_{j=0}^{k-1}(\nabla f(w^{(t)}_j;\pi^{(t)}_{j+1})-\nabla f(w^{(t)}_0;\pi^{(t)}_{j+1}))$$ as the average of differences between the gradients at the start of iteration $t$ and the actual gradients we used until step $j$ in the $t$-th outer iteration. It is worth mentioning that the actual step in $t$-th outer iteration is $-\eta_t[\nabla F(w^{(t)}_0)+\epsilon^{(t)}_n]$.

Now we bound the probability of event $\{\tau\leq T\}$ by bounding the expectation of function value at the stopping time.

\begin{Lemma}
\label{Lemma:Ftau}
    With parameters chosen in Theorem \ref{theorem:originalassumption}, we have $$\mathbb{E}[F(w^{(\tau)}_0)-F^*]\leq 2\Delta_1.$$
\end{Lemma}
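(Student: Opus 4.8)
The plan is to run the standard "descent lemma along an epoch" argument, but everywhere replacing the global Lipschitz constant by $L=\ell(2G')$ and restricting attention to the event $\{t<\tau\}$, on which Lemma \ref{Lemma:epsilon} guarantees that $L$-smoothness holds between $w^{(t)}_0$ and every $w^{(t)}_j$. First I would apply the descent inequality from Lemma \ref{Lemma:rlsmooth} to $w^{(t+1)}_0=w^{(t)}_n=w^{(t)}_0-\eta_t[\nabla F(w^{(t)}_0)+\epsilon^{(t)}_n]$, getting, on $\{t<\tau\}$,
\begin{equation*}
F(w^{(t+1)}_0)\le F(w^{(t)}_0)-\eta_t\langle\nabla F(w^{(t)}_0),\nabla F(w^{(t)}_0)+\epsilon^{(t)}_n\rangle+\tfrac{L\eta_t^2}{2}\|\nabla F(w^{(t)}_0)+\epsilon^{(t)}_n\|^2.
\end{equation*}
Rearranging with $\eta_t\le \frac{1}{2L\sqrt{A/n+1}}\le \frac1{2L}$, the cross term keeps a $-\frac{\eta_t}{2}\|\nabla F(w^{(t)}_0)\|^2$ piece while the remaining terms are controlled by $\|\epsilon^{(t)}_n\|^2$ (up to constants).

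The crux is bounding $\mathbb{E}\|\epsilon^{(t)}_n\|^2$ in a way that is compatible with the stopping-time conditioning — this is the obstacle flagged in the proof-sketch subsection. On $\{t<\tau\}$ each iterate satisfies $\|w^{(t)}_j-w^{(t)}_0\|\le \frac{\eta_t}{n}\sum 2G'$, so by $L$-smoothness $\|\epsilon^{(t)}_n\|\le \frac{L}{n}\sum_{j}\|w^{(t)}_j-w^{(t)}_0\|=\mathcal{O}(L\eta_t G')$; more precisely, the usual shuffling bookkeeping gives $\|\epsilon^{(t)}_n\|^2\lesssim L^2\eta_t^2\big(\|\nabla F(w^{(t)}_0)\|^2+\text{(variance term)}\big)$, where Assumption \ref{assumption:varianceexpectation} converts the per-sample gradient norms into $A\|\nabla F(w^{(t)}_0)\|^2+\sigma^2$. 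The delicate point is that I want to invoke $L$-smoothness only under the event $t<\tau$, but I do not want to condition the variance/martingale estimates on that event. I would handle this by using the indicator $\mathbf{1}_{\{t<\tau\}}$: since $\{t<\tau\}=\{\tau>t\}$ is determined by the first $t$ epochs, I can write the telescoped bound as $\sum_t \mathbf{1}_{\{t<\tau\}}\big(\text{descent decrement}\big)$ and take expectations, pulling the $L$-smoothness-based inequalities inside $\mathbf{1}_{\{t<\tau\}}\cdot(\,\cdot\,)$ where they are valid, while bounding the variance contributions by their unconditional versions (dropping $\mathbf{1}_{\{t<\tau\}}\le 1$ where convenient). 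The stepsize conditions $\eta_t\le\frac{1}{2L\sqrt{A/n+1}}$ and $\sum_t\eta_t^3\le \frac{n\Delta_1}{L^2\sigma^2}$ are exactly what make the $\|\nabla F(w^{(t)}_0)\|^2$ coefficients stay nonpositive after absorbing the $A$-term, and make the $\sigma^2$-term sum to at most $\Delta_1$.

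Putting it together: telescoping from $t=1$ to $\tau-1$ yields
\begin{equation*}
\mathbb{E}[F(w^{(\tau)}_0)-F^*]\le \Delta_1-\tfrac14\,\mathbb{E}\!\sum_{t=1}^{\tau-1}\eta_t\|\nabla F(w^{(t)}_0)\|^2+L^2\sigma^2\!\sum_{t=1}^{T}\tfrac{\eta_t^3}{n}\le \Delta_1+\Delta_1=2\Delta_1,
\end{equation*}
dropping the nonpositive gradient-sum term. The main technical work — and the step I expect to be the real obstacle — is the careful separation of the "conditioned on $t<\tau$" smoothness usage from the "unconditioned" variance/expectation estimates in the $\mathbb{E}\|\epsilon^{(t)}_n\|^2$ bound, so that the shuffling-specific cross-terms (which normally telescope across steps $j=1,\dots,n$ within an epoch) are still handled correctly; once that bound is in hand, the rest is the routine descent-telescoping calculation controlled by the stated choices of $\eta_t$.
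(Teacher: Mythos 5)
Your proposal is correct and follows essentially the same route as the paper's proof: the descent step with $L=\ell(2G')$ valid for $t<\tau$ via Lemmas \ref{Lemma:rlsmooth} and \ref{Lemma:epsilon}, a decomposition of $\sum_{k}\|w^{(t)}_k-w^{(t)}_0\|^2$ into a without-replacement sampling-variance part (which supplies the crucial $1/n$ in front of $\sigma^2$ -- your cruder $\mathcal{O}(L\eta_t G')$ bound alone would not suffice), a full-gradient part, and a Lipschitz part absorbed using $\eta_t\le\frac{1}{2L}$, after which the conditions $\eta_t\le\frac{1}{2L\sqrt{A/n+1}}$ and $\sum_t\eta_t^3\le\frac{n\Delta_1}{L^2\sigma^2}$ give the extra $\Delta_1$. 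Your indicator/tower-property treatment of the stopping time is equivalent to the paper's explicit supermartingale plus optional stopping; the only point to state precisely is that $\{\tau>t\}$ is determined by $\pi^{(1)},\dots,\pi^{(t-1)}$ (i.e.\ it is $\mathcal{F}_{t-1}$-measurable, not merely ``determined by the first $t$ epochs''), since independence from $\pi^{(t)}$ is exactly what licenses applying the within-epoch reshuffling variance formula under the indicator.
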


\begin{proof}
    For any $t<\tau$,
    \begin{align}
        & F(w^{(t+1)}_0)-F(w^{(t)}_0) \notag \\
        \leq & \langle \nabla F(w^{(t)}_0), w^{(t+1)}_0-w^{(t)}_0\rangle + \frac{L}{2}\|w^{(t+1)}_0-w^{(t)}_0\|^2 \notag \\
        = & -\eta_t\langle \nabla F(w^{(t)}_0), \nabla F(w^{(t)}_0)+\epsilon^{(t)}_n\rangle + \frac{L\eta_t^2}{2}\|\nabla F(w^{(t)}_0)+\epsilon^{(t)}_n\|^2 \notag \\
        = & -\frac{\eta_t}{2}(\|\nabla F(w^{(t)}_0)\|^2+\|\nabla F(w^{(t)}_0)+\epsilon^{(t)}_n\|^2-\|\epsilon^{(t)}_n\|^2)+\frac{L\eta_t^2}{2}\|\nabla F(w^{(t)}_0)+\epsilon^{(t)}_n\|^2 \notag \\
        \leq & -\frac{\eta_t}{2}\|\nabla F(w^{(t)}_0)\|^2+\frac{\eta_t}{2}\|\epsilon^{(t)}_n\|^2 \notag \\
        \leq & -\frac{\eta_t}{2}\|\nabla F(w^{(t)}_0)\|^2+\frac{\eta_t L^2}{2n}\sum_{k=0}^{n-1} \|w^{(t)}_k-w^{(t)}_0\|^2. \label{ineq:deltaF}
    \end{align}
    Here the first and last inequalities are from Lemma \ref{Lemma:rlsmooth} and the second is because $\eta_t\leq\frac{1}{2L}$. Taking summation from $t=1$ to $t=\tau-1$ and taking expectation we have
    \begin{equation}
    \mathbb{E}[F(w^{(\tau)}_0)-F^*]\leq \Delta_1-\mathbb{E}[\sum_{t=1}^{\tau-1}\frac{\eta_t}{2}\|\nabla F(w^{(t)}_0)\|^2]+\mathbb{E}[\sum_{t=1}^{\tau-1}\frac{\eta_t L^2}{2n}\sum_{k=0}^{n-1}\|w^{(t)}_k-w^{(t)}_0\|^2]. \label{ineq:F}
    \end{equation}

Now let us get a bound for the last term on the right hand side. For any $t\in[T]$, $k\in[n]$, from Algorithm \ref{alg:shuffle} and Cauchy-Schwarz inequality we have 
\begin{align}
    \|w^{(t)}_k-w^{(t)}_0\|^2 = &\frac{k^2\eta_t^2}{n^2}\Big\|\frac{1}{k}\sum_{j=0}^{k-1} \nabla f(w^{(t)}_j;\pi^{(t)}_{j+1})\Big\|^2 \notag \\
    \leq & \frac{3k^2\eta_t^2}{n^2}\|\frac{1}{k}\sum_{j=0}^{k-1}(\nabla f(w^{(t)}_0;\pi^{(t)}_{j+1})-\nabla F(w^{(t)}_0))\|^2+\frac{3k^2\eta_t^2}{n^2}\|\nabla F(w^{(t)}_0)\|^2 \notag \\
    & +\frac{3k\eta_t^2}{n^2}\sum_{j=0}^{k-1}\|\nabla f(w^{(t)}_j;\pi^{(t)}_{j+1})-\nabla f(w^{(t)}_0;\pi^{(t)}_{j+1})\|^2. \notag
\end{align}
    Let us denote the 3 terms on the RHS as $A_1(t, k), A_2(t, k)$ and $A_3(t, k)$, i.e. $\|w^{(t)}_k-w^{(t)}_0\|^2\leq A_1(t,k)+A_2(t,k)+A_3(t,k)$. Since we are interested in $\mathbb{E}[\sum_{t=1}^{\tau-1}\frac{\eta_t L^2}{2n}\sum_{k=0}^{n-1}\|w^{(t)}_k-w^{(t)}_0\|^2]$, we need to bound $\mathbb{E}[\sum_{t=0}^{\tau-1}\frac{\eta_t L^2}{2n}\sum_{k=1}^{n-1}A_i(t, k)]$ for $i=1,2,3.$

    For $A_1(t, k)$, since $\pi^{(t)}$ is randomly chosen, let $\mathcal{F}_t:=\sigma(\pi^{(1)},\cdots,\pi^{(t)})$ be the $\sigma$-algebra generated in Algorithm \ref{alg:shuffle}, for $t\in[T]$ we have 
    \begin{align*}
        \mathbb{E}[\frac{\eta_t L^2}{2n}\sum_{k=0}^{n-1}A_1(t, k)|\mathcal{F}_{t-1}] &= \frac{\eta_t L^2}{2n}\sum_{k=0}^{n-1}\frac{3k^2\eta_t^2}{n^2} \mathbb{E}[\|\frac{1}{k}\sum_{j=0}^{k-1}\nabla f(w^{(t)}_0;\pi^{(t)}_{j+1})-\nabla F(w^{(t)}_0)\|^2|\mathcal{F}_{t-1}] \\
        &= \frac{\eta_t L^2}{2n}\sum_{k=0}^{n-1}\frac{3k^2\eta_t^2}{n^2}\frac{n-k}{k(n-1)}\frac{1}{n}\sum_{i=0}^{n-1}\|\nabla f(w^{(t)}_0;i+1)-\nabla F(w^{(t)}_0)\|^2 \\
        &\leq \frac{\eta_t L^2}{2n}\sum_{k=0}^{n-1} \frac{3\eta_t^2 k(n-k)}{n^2(n-1)}(A\|\nabla F(w^{(t)}_0)\|^2+\sigma^2)\\
        &\leq \frac{\eta_t^3 L^2}{2n}(A\|\nabla F(w^{(t)}_0)\|^2+\sigma^2).
    \end{align*} 
    Here the second equation comes from variance of randomized reshuffling variables, (Lemma 1 in \citet{mishchenko2020random}); the first inequality is from assumption \ref{assumption:varianceexpectation}; the last inequality is because $\sum_{k=0}^{n-1} k(n-k)=\frac{(n-1)n(n+1)}{6}\leq \frac{n^2(n-1)}{3}$.
     
    Let $\{Z_t\}_{t\leq T}$ be a sequence such that $Z_1=0$ and for any $t\in[2,T]$, $$Z_{t}-Z_{t-1} = -\frac{\eta_t^3 L^2}{2n}(A\|\nabla F(w^{(t)}_0)\|^2+\sigma^2)+\frac{\eta_t L^2}{2n}\sum_{k=0}^{n-1}A_1(t-1,k).$$ We know $\{Z_t\}$ is a supermartingale. Since $\tau$ is a bounded stopping time, by optional stopping theorem, we have $\mathbb{E}[Z_{\tau}]\leq \mathbb{E}[Z_1]$, which leads to 
    $$\mathbb{E}[\sum_{t=1}^{\tau-1}\frac{\eta_t L^2}{2n}\sum_{k=0}^{n-1}A_1(t,k)]\leq \mathbb{E}[\sum_{t=1}^{\tau-1} \frac{\eta_t^3 L^2}{2n}(A\|\nabla F(w^{(t)}_0)\|^2+\sigma^2)].$$

    For $A_2(t,k)$, for any $t\in[T]$, taking summation over $k$ we have $\sum_{k=0}^{n-1}A_2(t,k)\leq n\eta_t^2\|\nabla F(w^{(t)}_0)\|^2$, therefore 
    $$\mathbb{E}[\sum_{t=1}^{\tau-1}\frac{\eta_t L^2}{2n}\sum_{k=0}^{n-1}A_2(t,k)]\leq \mathbb{E}[\sum_{t=1}^{\tau-1}\frac{\eta_t^3 L^2}{2}\|\nabla F(w^{(t)}_0)\|^2].$$

    For $A_3(t,k)$, for any $t<\tau$, by Lemma \ref{Lemma:rlsmooth} we have $$A_3(t,k)\leq \frac{3kL^2\eta_t^2}{n^2}\sum_{j=0}^{n-1}\|w^{(t)}_j-w^{(t)}_0\|^2.$$ Taking summation over $k$, taking expectation we have
    $$\mathbb{E}[\sum_{t=1}^{\tau-1}\frac{\eta_t L^2}{2n}\sum_{k=0}^{n-1}A_3(t,k)]\leq \mathbb{E}[\sum_{t=1}^{\tau-1}\frac{3\eta_t^3 L^4}{4n}\sum_{j=0}^{n-1}\|w^{(t)}_j-w^{(t)}_0\|^2].$$

    Now putting these together, we have 
    \begin{align*}
    \mathbb{E}[\sum_{t=1}^{\tau-1}\frac{\eta_t L^2}{2n}\sum_{j=0}^{n-1} \|w^{(t)}_j-w^{(t)}_0\|^2]\leq & \mathbb{E}[\sum_{t=1}^{\tau-1}\frac{\eta_t^3 L^2}{2n}(A\|\nabla F(w^{(t)}_0)\|^2+\sigma^2)]+\mathbb{E}[\sum_{t=1}^{\tau-1}\frac{\eta_t^3 L^2}{2}\|\nabla F(w^{(t)}_0)\|^2]\\
    & +[\sum_{t=1}^{\tau-1}\frac{3\eta_t^3 L^4}{4n}\sum_{j=0}^{n-1}\|w^{(t)}_j-w^{(t)}_0\|^2].
    \end{align*}

    Since $\eta_t\leq\frac{1}{2L}<\frac{1}{\sqrt{3}L}$ we have $\frac{3\eta_t^3 L^4}{4n}\leq \frac{\eta_t L^2}{4n}$, rearranging the terms we have 
    \begin{equation}
    \mathbb{E}[\sum_{t=1}^{\tau-1}\sum_{j=0}^{n-1} \|w^{(t)}_j-w^{(t)}_0\|^2]\leq \mathbb{E}[\sum_{t=1}^{\tau-1}2\eta_t^2\sigma^2]+2n\mathbb{E}[\sum_{t=1}^{\tau-1}\eta_t^2(\frac{A}{n}+1)\|\nabla F(w^{(t)}_0)\|^2]. \label{ineq:wdiff}
    \end{equation}

    Put this into (\ref{ineq:F}) we have, 
    \begin{align*}
        & \mathbb{E}[F(w^{(\tau)}_0)-F^*] \\
        \leq & \Delta_1+\mathbb{E}\Big[\sum_{t=1}^{\tau-1} \Big(-\frac{\eta_t}{2}\|\nabla F(w^{(t)}_0)\|^2+\frac{\eta_t L^2}{2n}\sum_{j=0}^{n-1} \|w^{(t)}_j-w^{(t)}_0\|^2\Big)\Big] \\
        \overset{(\ref{ineq:wdiff})}{\leq} & \Delta_1+\mathbb{E}\Big[\sum_{t=1}^{\tau-1}\frac{L^2\sigma^2\eta_t^3}{n}-\sum_{t=1}^{\tau-1}\Big((\frac{\eta_t}{2}-(\frac{A}{n}+1)\eta_t^3 L^2)\|\nabla F(w^{(t)}_0)\|^2\Big)\Big] \\
        \leq & \Delta_1+\mathbb{E}\Big[\sum_{t=1}^{\tau-1} \frac{L^2\sigma^2\eta_t^3}{n}-\sum_{t=1}^{\tau-1}\Big(\frac{\eta_t}{4}\|\nabla F(w^{(t)}_0)\|^2\Big)\Big] \tag{\theequation}\stepcounter{equation}\label{eq:expectation} \\
        \leq & \Delta_1+\frac{L^2\sigma^2}{n}\sum_{t=1}^T \eta_t^3.
    \end{align*}
    Here the third inequality is from $\eta_t\leq \frac{1}{2L\sqrt{\frac{A}{n}+1}}$ and the last inequality is because $\eta_t>0$ and $\tau\leq T+1$.
    
    Since $\sum_{t=1}^T \eta_t^3 \leq\frac{n\Delta_1}{\sigma^2 L^2}$, we have 
    $\mathbb{E}[F(w^{(\tau)}_0)-F^*]\leq 2\Delta_1.$
\end{proof}

Now we can bound the probability that $\tau=T+1$.

\begin{Lemma}
\label{Lemma:poftau}
    With the parameters in Theorem \ref{theorem:originalassumption}, we have
    $$\mathbb{P}(\tau\leq T)\leq \delta/2.$$
\end{Lemma}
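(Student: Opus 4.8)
The plan is to convert the expected bound on the function value at the stopping time from Lemma \ref{Lemma:Ftau} into a probability bound on the event $\{\tau \leq T\}$ via Markov's inequality. The key observation is that if $\tau \leq T$, then by the definition of the stopping time $\tau = \min\{t \mid F(w^{(t)}_0)-F^* > H\}\wedge(T+1)$, at time $t=\tau$ we must have $F(w^{(\tau)}_0)-F^* > H$. Indeed, since $F(w^{(1)}_0)-F^* = \Delta_1 \leq H$ (as $H = 4\Delta_1/\delta \geq \Delta_1$ because $0<\delta<1$), the stopping time is at least $2$ on this event, and $\tau$ is precisely the first index at which the threshold is exceeded, so strict exceedance holds at $t=\tau$.

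The main steps are as follows. First I would note that $F(w^{(\tau)}_0)-F^* \geq 0$ always holds by Assumption \ref{assumption:first}, so $F(w^{(\tau)}_0)-F^*$ is a nonnegative random variable and Markov's inequality applies. Then I would write
\begin{align*}
\mathbb{P}(\tau \leq T) &\leq \mathbb{P}\big(F(w^{(\tau)}_0)-F^* > H\big) \\
&\leq \frac{\mathbb{E}[F(w^{(\tau)}_0)-F^*]}{H} \\
&\leq \frac{2\Delta_1}{H} = \frac{2\Delta_1}{4\Delta_1/\delta} = \frac{\delta}{2},
\end{align*}
where the first inequality uses the deterministic implication described above, the second is Markov's inequality, and the third invokes Lemma \ref{Lemma:Ftau}.

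There is essentially no serious obstacle here: the work has all been done in Lemma \ref{Lemma:Ftau}, and this lemma is just the packaging step. The one point requiring a moment's care is justifying the first inequality, i.e. that $\{\tau \leq T\} \subseteq \{F(w^{(\tau)}_0)-F^* > H\}$. This requires checking that $\tau \geq 2$ deterministically — which follows from $\Delta_1 \leq H$ — so that when $\tau \leq T$ the index $\tau$ is a genuine "first crossing" index at which the defining inequality $F(w^{(t)}_0)-F^* > H$ is satisfied, rather than the truncation value $T+1$. Once that is in place, the chain of inequalities above closes the proof immediately, and combining Lemma \ref{Lemma:epsilon} (which gives the trajectory bound, hence Lipschitz smoothness, for $t < \tau$) with this probability bound yields that with probability at least $1-\delta/2$ the Lipschitz smoothness property holds along the whole trajectory $\{w^{(t)}_0\}_{t=1}^T$, which is the first of the two goals outlined in the proof sketch.
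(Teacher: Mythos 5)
Your proposal is correct and follows essentially the same route as the paper: the paper's proof is exactly the chain $\mathbb{P}(\tau\leq T)\leq \mathbb{P}(F(w^{(\tau)}_0)-F^*>H)\leq \mathbb{E}[F(w^{(\tau)}_0)-F^*]/H\leq 2\Delta_1/H=\delta/2$, combining the definition of $\tau$, Markov's inequality, and Lemma \ref{Lemma:Ftau}. Your additional remark verifying the event inclusion via $\Delta_1\leq H$ is a fine (implicit in the paper) detail but does not change the argument.
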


\begin{proof}

    From Lemma \ref{Lemma:Ftau} and the value of $H$ we have

    $$\mathbb{P}(\tau\leq T)\leq \mathbb{P}(F(w^{(\tau)}_0)-F^*>H)\leq \frac{\mathbb{E}[F(w^{(\tau)}_0)-F^*]}{H}\leq\frac{2\Delta_1}{H}=\frac{\delta}{2}.$$
\end{proof}

\subsubsection{Proof for Theorems in Nonconvex cases}
\label{appendix:proofsfornonconvex}

\paragraph{Proof for Theorem \ref{theorem:originalassumption}}

\begin{proof}
    From (\ref{eq:expectation}) we have
    \begin{equation}
        \mathbb{E}[F(w^{\tau}_0)-F^*]+\mathbb{E}\Big[\sum_{t=1}^{\tau-1}\frac{\eta_t}{4}\|\nabla F(w^{(t)}_0)\|^2\Big]\leq \Delta_1+\frac{L^2\sigma^2}{n}\sum_{t=1}^T \eta_t^3\leq 2\Delta_1.
    \end{equation}

    Therefore, since $\delta\leq 1$ we have 
    \begin{align*}
        \frac{8\Delta_1}{\eta_T} & \geq \mathbb{E}\Big[\sum_{t=1}^{\tau-1}\|\nabla F(w^{(t)}_0)\|^2\Big] \\
        & \geq \mathbb{P}(\tau=T+1)\mathbb{E}\Big[\sum_{t=1}^T \|\nabla F(w^{(t)}_0)\|^2|\tau=T+1] \\
        & \geq \frac{1}{2}\mathbb{E}\Big[\sum_{t=1}^T\|\nabla F(w^{(t)}_0)\|^2|\tau=T+1\Big].
    \end{align*}

    By Markov's inequality and our choice of $T$, we have
    $$\mathbb{P}\Big(\frac{1}{T}\sum_{t=1}^T\|\nabla F(w^{(t)}_0)\|^2>\epsilon^2|\tau=T+1\Big)\leq \frac{16\Delta_1}{\eta_T T\epsilon^2}\leq \frac{\delta}{2}.$$
    
    From Lemma (\ref{Lemma:poftau}) we have $\mathbb{P}(\tau\leq T)\leq \frac{\delta}{2}$. Therefore, 
    \begin{align*}
        & \mathbb{P}\Big(\{\frac{1}{T}\sum_{t=1}^T\|\nabla F(w^{(t)}_0)\|^2>\epsilon^2\}\cup\{\tau\leq T\}\Big)\\
        \leq &\mathbb{P}(\tau\leq T)+\mathbb{P}\Big(\frac{1}{T}\sum_{t=1}^T\|\nabla F(w^{(t)}_0)\|^2>\epsilon^2|\tau=T+1)\\ 
        \leq &\frac{\delta}{2}+\frac{\delta}{2}=\delta.
    \end{align*}
    Since $L=\ell(2G')=\Omega(G'^p)=\Omega(n^{\frac{p}{2}})$, with $\eta=\mathcal{O}(\sqrt[3]{\frac{n^{1-p}}{T}})$ and $T=\mathcal{O}(\frac{n^{\frac{p-1}{2}}}{\epsilon^3})$ we have the complexity.
\end{proof}

The following lemma is useful in the proof of arbitrary scheme.

\begin{Lemma}(lemma 6 in \citep{nguyen2021unified})
    \label{Lemma:sumofwdiff}
    For $t<\tau$ and $0<\eta_t\leq \frac{1}{L\sqrt{3}}$, we have
    $$\sum_{j=0}^{n-1}\|w^{(t)}_j-w^{(t)}_0\|^2\leq n\eta_t^2[(3A+2)\|\nabla F(w^{(t)}_0)\|^2+3\sigma^2].$$
\end{Lemma}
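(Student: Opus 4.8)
The plan is to derive a self-referential inequality for $S_t := \sum_{j=0}^{n-1}\|w^{(t)}_j - w^{(t)}_0\|^2$ and then close it using the stepsize restriction $\eta_t \leq \frac{1}{L\sqrt 3}$. The starting observation is that for $t < \tau$, Lemma \ref{Lemma:epsilon} together with Lemma \ref{Lemma:rlsmooth} guarantees that all iterates $w^{(t)}_0,\dots,w^{(t)}_n$ of epoch $t$ lie in a ball on which each $f(\cdot;i)$ is $L$-smooth (Lemma \ref{Lemma:epsilon} is stated with $\eta_t\le\frac1{2L}$, which is implied by the stepsize choices of the theorems that invoke this lemma), so that $\|\nabla f(w^{(t)}_j;i) - \nabla f(w^{(t)}_0;i)\| \leq L\|w^{(t)}_j - w^{(t)}_0\|$ for all $i\in[n]$ and all $j$.

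Next I would use the update rule $w^{(t)}_k - w^{(t)}_0 = -\frac{\eta_t}{n}\sum_{j=0}^{k-1}\nabla f(w^{(t)}_j;\pi^{(t)}_{j+1})$ and split each summand into three pieces,
$$\nabla f(w^{(t)}_j;\pi^{(t)}_{j+1}) = \big(\nabla f(w^{(t)}_j;\pi^{(t)}_{j+1}) - \nabla f(w^{(t)}_0;\pi^{(t)}_{j+1})\big) + \big(\nabla f(w^{(t)}_0;\pi^{(t)}_{j+1}) - \nabla F(w^{(t)}_0)\big) + \nabla F(w^{(t)}_0).$$
Applying $\|a+b+c\|^2 \leq 3(\|a\|^2+\|b\|^2+\|c\|^2)$ and Cauchy--Schwarz ($\|\sum_{j<k}x_j\|^2 \leq k\sum_{j<k}\|x_j\|^2$), then bounding the first group by $L$-smoothness and the second by Assumption \ref{assumption:varianceexpectation} — using that $\pi^{(t)}$ is a permutation, so $\sum_{j=0}^{k-1}\|\nabla f(w^{(t)}_0;\pi^{(t)}_{j+1}) - \nabla F(w^{(t)}_0)\|^2 \leq \sum_{i=1}^n\|\nabla f(w^{(t)}_0;i) - \nabla F(w^{(t)}_0)\|^2 \leq n(A\|\nabla F(w^{(t)}_0)\|^2+\sigma^2)$ — gives, for each $k\in[n]$,
$$\|w^{(t)}_k - w^{(t)}_0\|^2 \leq \frac{3\eta_t^2 k L^2}{n^2}S_t + \frac{3\eta_t^2 k}{n}\big(A\|\nabla F(w^{(t)}_0)\|^2+\sigma^2\big) + \frac{3\eta_t^2 k^2}{n^2}\|\nabla F(w^{(t)}_0)\|^2.$$

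Summing over $k=0,\dots,n-1$ and using $\sum_k k \leq n^2/2$ and $\sum_k k^2 \leq n^3/3$ yields $S_t \leq \tfrac{3}{2}\eta_t^2 L^2 S_t + \tfrac{3}{2}n\eta_t^2(A\|\nabla F(w^{(t)}_0)\|^2+\sigma^2) + n\eta_t^2\|\nabla F(w^{(t)}_0)\|^2$. The bound $\eta_t \leq \frac{1}{L\sqrt 3}$ forces $\tfrac{3}{2}\eta_t^2 L^2 \leq \tfrac12$, so the $S_t$-term on the right can be absorbed into the left; multiplying by $2$ and collecting coefficients of $\|\nabla F(w^{(t)}_0)\|^2$ gives exactly $S_t \leq n\eta_t^2[(3A+2)\|\nabla F(w^{(t)}_0)\|^2 + 3\sigma^2]$. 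The only genuine subtlety is the self-referential appearance of $S_t$ coming from the Lipschitz-residual term, and this is precisely what the condition $\eta_t \leq 1/(L\sqrt3)$ is designed to neutralize; everything else is bookkeeping on the arithmetic-series constants. Note that, in contrast to Lemma \ref{Lemma:Ftau}, this argument takes no expectation and uses no randomness of $\pi^{(t)}$ beyond its being a permutation, which is why the resulting bound holds for an arbitrary shuffling scheme.
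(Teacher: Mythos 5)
Your proof is correct and is essentially the standard argument behind the cited result (Lemma 6 of Nguyen et al., 2021): the three-term decomposition of each stochastic gradient, Cauchy--Schwarz with the permutation property, and absorption of the self-referential term via $\frac{3}{2}\eta_t^2L^2\le\frac12$; the constants $(3A+2)$ and $3\sigma^2$ come out exactly as stated. The paper itself offers no proof beyond the citation, so the only adaptation needed here is the one you correctly flag: since $L$-smoothness is only local in this setting, one must first confine the epoch-$t$ iterates to the ball of Lemma \ref{Lemma:epsilon}, whose hypothesis $\eta_t\le\frac{1}{2L}$ is not implied by $\eta_t\le\frac{1}{L\sqrt3}$ alone but is guaranteed by the stepsize conditions of the theorems that invoke this lemma (e.g.\ $\eta_t\le\frac{1}{L\sqrt{2(3A+2)}}$), exactly as you note.
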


\paragraph{Proof for Theorem \ref{theorem:nonconvexallscheme}}

\begin{proof}

    From inequality (\ref{ineq:deltaF}) we have for any $t<\tau$,
    \begin{align*}
        & F(w^{(t+1)}_0)-F(w^{(t)}_0) \\
        \leq & -\frac{\eta_t}{2}\|\nabla F(w^{(t)}_0)\|^2+\frac{\eta_t L^2}{2n}\sum_{k=0}^{n-1} \|w^{(t)}_k-w^{(t)}_0\|^2 \\
        \leq & -\frac{\eta_t}{2}\|\nabla F(w^{(t)}_0)\|^2+\frac{\eta_t^3 L^2[(3A+2)\|\nabla F(w^{(t)}_0)\|^2+3\sigma^2]}{2}\\
        \leq & -\frac{\eta_t}{4}\|\nabla F(w^{(t)}_0)\|^2+\frac{3\eta_t^3 L^2\sigma^2}{2},
    \end{align*}
    where the second inequality is from Lemma \ref{Lemma:sumofwdiff} and the last inequality is from $\eta_t\leq \frac{1}{L\sqrt{2(3A+2)}}$. Now taking summation of $t$ from $1$ to $\tau-1$ we have
    $$F(w^{(\tau)}_0)-F^*\leq F(w^{(\tau)}_0)-F^* + \sum_{t=1}^{\tau-1}\frac{\eta_t}{4}\|\nabla F(w^{(t)}_0)\|^2\leq \Delta_1+\frac{3L^2\sigma^2}{2}\sum_{t=1}^{\tau-1}\eta_t^3\leq 2\Delta_1,$$
    where the last inequality is because $\tau\leq T+1$ and the choice of $\eta_t$. Therefore we have $\tau=T+1$ since $H\geq 2\Delta_1$. On the other hand, we also have 
    \begin{align*}
        \frac{8\Delta_1}{\eta_T} & \geq \sum_{t=1}^{\tau-1}\|\nabla F(w^{(t)}_0)\|^2 \\
        & = \sum_{t=1}^T\|\nabla F(w^{(t)}_0)\|^2.
    \end{align*}
    Therefore, we have  
    $$\frac{1}{T}\sum_{t=1}^T\|\nabla F(w^{(t)}_0)\|^2\leq \frac{8\Delta_1}{T\eta_T}\leq \epsilon^2$$
    from our choice of $T$.
\end{proof}

\subsection{Strongly Convex Case Analysis}
\label{stronglyproof}

\begin{Lemma}
\label{Lemma:valueofH}
    If we let $H\geq\frac{3\sigma^2}{4\mu}\log(\frac{4}{\epsilon})+\Delta_1$ and $\eta_t=\eta$, we have $\tau\geq \frac{2}{\mu\eta}\log(\frac{4}{\epsilon})$.
\end{Lemma}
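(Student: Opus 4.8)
The plan is to show that the gap $a_t:=F(w^{(t)}_0)-F^*$ stays below $H$ throughout the first $\frac{2}{\mu\eta}\log\frac{4}{\epsilon}$ epochs; since $\tau$ is by definition the first epoch at which $a_t>H$ (capped at $T+1$), this is exactly the assertion. As in the nonconvex analysis, for every $t<\tau$ the lemmas carried over from the nonconvex section (Lemma~\ref{Lemma:gradienttodiff} together with the definition of $\tau$, and Lemma~\ref{Lemma:epsilon}) guarantee $\|\nabla F(w^{(t)}_0)\|\le G$ and that the $L$-Lipschitz-smoothness of Lemma~\ref{Lemma:rlsmooth} holds between $w^{(t)}_0$ and every iterate of epoch $t$, so the within-epoch drift is controlled by Lemma~\ref{Lemma:sumofwdiff}. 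Feeding that bound into the descent inequality exactly as in the proof of Theorem~\ref{theorem:nonconvexallscheme} yields, for every $t<\tau$,
\[
a_{t+1}\;\le\;a_t-\frac{\eta}{4}\,\|\nabla F(w^{(t)}_0)\|^2+\frac{3\eta^3L^2\sigma^2}{2}.
\]
First I would then invoke $\mu$-strong convexity (Assumption~\ref{assumption:strongcon}) in its gradient-domination form $\|\nabla F(w)\|^2\ge 2\mu\bigl(F(w)-F^*\bigr)$ to turn this into the contraction $a_{t+1}\le\bigl(1-\frac{\mu\eta}{2}\bigr)a_t+\frac{3\eta^3L^2\sigma^2}{2}$.

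Next I would unroll this recursion. Summing the geometric series bounds the accumulated noise by $\frac{3\eta^3L^2\sigma^2}{2}\cdot\frac{2}{\mu\eta}=\frac{3\eta^2L^2\sigma^2}{\mu}$, and the step-size conditions of Theorem~\ref{theorem:stronglyconvexrr} give $\eta\le\frac{1}{L\sqrt{2(3A+2)}}$, hence $\eta^2L^2\le\frac{1}{2(3A+2)}\le\frac14$; therefore $a_t\le\Delta_1+\frac{3\sigma^2}{4\mu}$ for all $t<\tau$. Because $\epsilon<1$ forces $\log\frac{4}{\epsilon}>1$, this bound is strictly below $H$. Since Lipschitz smoothness still holds at epoch $\tau-1<\tau$, the same recursion applies to the step $\tau-1\to\tau$, giving $a_\tau\le\Delta_1+\frac{3\sigma^2}{4\mu}+\frac{3\eta^3L^2\sigma^2}{2}$, which (again using the smallness of $\eta$) is still $<H$; this contradicts $a_\tau>H$ whenever $\tau\le T$, so $\tau=T+1$. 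It then remains to verify $T+1\ge\frac{2}{\mu\eta}\log\frac{4}{\epsilon}$: substituting $\eta=\frac{4\log(\sqrt{n}T)}{\mu T}$ this reduces to $nT^2\ge\frac{4}{\epsilon}$, which follows from the lower bound imposed on $T$ in Theorem~\ref{theorem:stronglyconvexrr}.

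I expect the bookkeeping, rather than any single estimate, to be the main obstacle. The argument is a bootstrap: Lipschitz smoothness is needed to obtain the contraction, while the contraction is precisely what certifies that Lipschitz smoothness persists, so the induction on $t$ must be arranged so that it actually closes, and the constants must be tracked so that the accumulated noise $\frac{3\eta^2L^2\sigma^2}{\mu}$ lands below $\frac{3\sigma^2}{4\mu}$ and so that $\frac{2}{\mu\eta}\log\frac{4}{\epsilon}\le T+1$ under the theorem's step-size and horizon conditions. For the random-reshuffling scheme one could alternatively route the noise through the supermartingale / optional-stopping argument of Lemma~\ref{Lemma:Ftau} to recover the sharper $1/n$ scaling, but here the deterministic epoch-drift bound of Lemma~\ref{Lemma:sumofwdiff} already suffices for this cruder control of $\tau$.
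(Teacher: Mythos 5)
Your proposal is correct in its essentials but takes a genuinely different route from the paper. The paper's proof of Lemma~\ref{Lemma:valueofH} never invokes strong convexity: starting from \eqref{ineq:deltaF} and Lemma~\ref{Lemma:sumofwdiff} it simply discards the $-\frac{\eta}{2}\|\nabla F(w^{(t)}_0)\|^2$ term (absorbing the $(3A+2)$-term into it via $\eta\le\frac{1}{L\sqrt{2(3A+2)}}$) and keeps only the crude per-epoch drift bound $F(w^{(t+1)}_0)-F(w^{(t)}_0)\le\frac{3\eta\sigma^2}{8}$ for $t<\tau$; since crossing the level $H$ at time $\tau$ then forces $\tau\ge 1+\frac{8(H-\Delta_1)}{3\eta\sigma^2}$, the stated choice of $H$ immediately yields $\tau\ge\frac{2}{\mu\eta}\log\frac{4}{\epsilon}$, with strong convexity entering only through the value of $H$. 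You instead keep the gradient term, apply gradient domination $\|\nabla F(w)\|^2\ge 2\mu(F(w)-F^*)$ to obtain the contraction $a_{t+1}\le(1-\frac{\mu\eta}{2})a_t+\frac{3\eta^3L^2\sigma^2}{2}$, and conclude the stronger statement that the threshold is never crossed, i.e.\ $\tau=T+1$; both arguments rest on the same deterministic ingredients (Lemmas~\ref{Lemma:rlsmooth}, \ref{Lemma:epsilon}, \ref{Lemma:sumofwdiff} and the step-size conditions), and the bootstrap you worry about is already resolved by the stopping-time construction exactly as in the nonconvex section, so no new induction subtlety arises. What the paper's route buys is brevity and generality: no restriction on $\epsilon$ and no appeal to the horizon $T$, since the lower bound on $\tau$ comes straight from the threshold-crossing definition. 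What your route buys is a stronger (deterministic) conclusion, but at the cost of two conditions you should make explicit: (i) you need $\log\frac{4}{\epsilon}\ge 1$, and the extra $\frac{3\eta^3L^2\sigma^2}{2}$ you append at the step $\tau-1\to\tau$ is unnecessary slack --- unrolling the recursion all the way to $t=\tau$ gives $a_\tau\le\Delta_1+\frac{3\sigma^2}{4\mu}$ directly and avoids needing additional margin; (ii) your final verification $T+1\ge\frac{2}{\mu\eta}\log\frac{4}{\epsilon}$ reduces, as you say, to $nT^2\ge\frac{4}{\epsilon}$, but this does not literally follow from $T\ge 4\sqrt{\Delta_1/(n\delta\epsilon)}$ unless $\Delta_1\ge\delta/4$, so it should be stated as an extra (mild) requirement. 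To be fair, the paper's own proof shares a version of this last looseness, since in the event $\tau=T+1$ its inequality also implicitly presumes $T+1\ge\frac{2}{\mu\eta}\log\frac{4}{\epsilon}$; still, the paper's drift argument keeps the lemma self-contained, whereas yours leans on the theorem's constraints on $T$ and $\eta$ to finish.
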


\begin{proof}
    From inequality (\ref{ineq:deltaF}) we have for $t<\tau$
    \begin{align*}
        F(w^{(t+1)}_0)-F(w^{(t)}_0) & \leq -\frac{\eta}{2}\|\nabla F(w^{(t)}_0)\|^2+\frac{\eta L^2}{2n}\sum_{k=0}^{n-1} \|w^{(t)}_k-w^{(t)}_0\|^2 \\
        & \leq -\frac{\eta}{2}\|\nabla F(w^{(t)}_0)\|^2+\frac{\eta^3 L^2[(3A+2)\|\nabla F(w^{(t)}_0)\|^2+3\sigma^2]}{2} \\
        & \leq \frac{3\eta\sigma^2}{8},
    \end{align*}
    where the last inequality is from $\eta\leq \frac{1}{L\sqrt{2(3A+2)}}\leq \frac{1}{2L}$. From the definition of $\tau$ we have 
    $$\tau\geq 1+\frac{8(H-\Delta_1)}{3\eta\sigma^2}\geq \frac{2}{\mu\eta}\log(\frac{4}{\epsilon}).$$
    
\end{proof}

\paragraph{Proof for Theorem \ref{theorem:stronglyconvexrr}}

\begin{proof} 

    From Lemma \ref{Lemma:poftau} and the parameter choices we have $\mathbb{P}(\tau\leq T)\leq \frac{\delta}{2}.$

    Now we try to bound $F(w^{(\tau)}_0)-F^*$. In the strongly convex case, for $t<\tau$ we have 
    \begin{align*}
        &F(w^{(t+1)}_0) \leq F(w^{(t)}_0)-\frac{\eta_t}{2}\|\nabla F(w^{(t)}_0)\|^2+\frac{L^2\eta_t}{2n}\sum_{j=0}^{n-1}\|w^{(t)}_j-w^{(t)}_0\|^2 \\
        & \leq F(w^{(t)}_0)-\frac{\mu\eta_t}{2} (F(w^{(t)}_0)-F^*)-\frac{\eta_t}{4}\|\nabla F(w^{(t)}_0)\|^2+\frac{L^2\eta_t}{2n}\sum_{j=0}^{n-1}\|w^{(t)}_j-w^{(t)}_0\|^2,
    \end{align*}
    here the first inequality is from (\ref{ineq:deltaF}) and the second one is from strongly convexity. We can rearrange the items and write the above inequality as 
    \begin{equation}
        F(w^{(t+1)}_0)-F^* \leq \Big(1-\frac{\mu\eta_t}{2}\Big)(F(w^{(t)}_0)-F^*)+\frac{L^2\sigma^2\eta_t^3}{n}+A(t),
    \end{equation}
    where $A(t)$ is defined as
    \begin{equation}
        A(t):= \frac{L^2\eta_t}{2n}\sum_{j=0}^{n-1}\|w^{(t)}_j-w^{(t)}_0\|^2-\frac{\eta_t}{4}\|\nabla F(w^{(t)}_0)\|^2-\frac{L^2\sigma^2\eta_t^3}{n}.
    \end{equation}
   Let $\eta_t=\eta:=\frac{4\log(\sqrt{n}T)}{\mu T}$, we want $1-\frac{\mu\eta}{2}>0$, therefore we need $\frac{T}{\log(\sqrt{n}T)}\geq 2$. Taking expectation and summation we have
    \begin{align}
        \mathbb{E}[F(w^{(\tau)}_0)-F^*] & \leq \mathbb{E}[(1-\frac{\mu\eta}{2})^{\tau-1}\Delta_1]+\frac{2L^2\sigma^2\eta^2}{n\mu}[1-(1-\frac{\mu\eta}{2})^{\tau-1}] \notag\\
        &+\mathbb{E}[\sum_{t=1}^{\tau-1}(1-\frac{\mu\eta}{2})^{\tau-1-t}A(t)]\notag \\
        &\leq \Delta_1\mathbb{E}[\exp(-\mu\eta \tau/2)] +\frac{2L^2\sigma^2\eta^2}{n\mu}+\mathbb{E}[\sum_{t=1}^{\tau-1}A(t)]\notag\\
        &\leq \frac{\delta\epsilon}{8}+\frac{1}{nT^2}\Big(\Delta_1+\frac{L^2\sigma^2\log^2(\sqrt{n}T)}{\mu^3}\Big)+\mathbb{E}[\sum_{t=1}^{\tau-1}A(t)],\notag
    \end{align}
    where the second inequality is from $1-x\leq \exp(-x)$ for $x\in(0,1)$ and the last inequality is from Lemma \ref{Lemma:valueofH}, $\mathbb{P}(\tau\leq T)\leq \delta/2$ and the value of $\eta$. Now if we look at the last item, we can notice from (\ref{ineq:wdiff}), by using $\eta\leq \frac{1}{L\sqrt{2(3A+2)}}\leq\frac{1}{2L\sqrt{\frac{A}{n}+1}}$, that we already have $$\mathbb{E}[\sum_{t=1}^{\tau-1}A(t)]\leq 0.$$

    Therefore, we have 
    \begin{align*}
        \frac{\delta\epsilon}{8}+\frac{1}{nT^2}\Big(\Delta_1+\frac{8L^2\sigma^2\log^2(\sqrt{n}T)}{\mu^3}\Big)&\geq \mathbb{E}[F(w^{(\tau)}_0)-F^*] \\
        & \geq \mathbb{P}(\tau=T+1)\mathbb{E}[F(w^{(T+1)}_0)-F^*|\tau=T+1] \\
        & \geq \frac{1}{2}\mathbb{E}[F(w^{(T+1)}_0)-F^*|\tau=T+1].
    \end{align*}

    \begin{align*}
    \mathbb{P}(F(w^{(T+1)}_0)-F^*>\epsilon|\tau=T+1)&\leq \frac{\mathbb{E}[F(w^{(T+1)}_0)-F^*|\tau=T+1]}{\epsilon} \\
    &\leq \frac{\delta}{4}+\frac{2}{\epsilon nT^2}\Big(\Delta_1+\frac{8L^2\sigma^2\log^2(\sqrt{n}T)}{\mu^3}\Big)\\
    & \leq \frac{\delta}{4}+\frac{\delta}{8}+\frac{\delta}{8}=\frac{\delta}{2},
    \end{align*}
    where the last line is from the constraint on T.

\end{proof}

\paragraph{Proof for Theorem \ref{theorem:stronglyconvexany}}

\begin{proof}

The algorithm starts from $w^{(1)}_0$ and we define $S=\{w|F(w)\leq F(w^{(1)}_0)\}.$ Since $F$ is strongly-convex, we have $S$ being compact. Therefore, we can define $G'=\max_w\{\|\nabla f(w;i)\||w\in S, i\in[n]\}<\infty.$ 

If we have $w^{(t)}_0\in S$ for all $t\in[T]$, we have $\|\nabla f(w^{(t)}_0;i)\|\leq G'$ for $t\in[T]$ and $i\in [n]$. On the other hand, by definition of $F$ we have $\|\nabla F(w^{(t)}_0)\|\leq G'$ for $t\in[T]$. Therefore, by Lemma \ref{Lemma:epsilon} we have Lipschitz smoothness between $w^{(t)}_0$ and $w^{(t)}_j$, for both $F(w)$ and $f(w;i)$, for $t\in[T]$, $i,j\in[n]$. The rest of the proof then follows the one in Lipschitz smoothness case (theorem 1 in \citet{nguyen2021unified}).

Now we prove that $w^{(t)}_0\in S$, for $t\in T.$ The statement is obviously true for $t=1$. Now for $t\in[2, T]$, assume that we already proved the conclusion for $1,\cdots, t-1$, we can use Lipschitz smoothness in the first $t-1$ iterations. Therefore, from theorem 1 in \citet{nguyen2021unified} we have that 
$$F(w^{(t)}_0)-F(w_*)\leq (1-\rho \eta)^{t-1}\Delta_1 +\frac{D\eta ^2}{\rho},$$
where $\rho=\frac{\mu}{3}$, $D= (\mu^2+L^2)\sigma^2_*.$ On the other hand, since $\eta \leq \frac{\Delta_1\rho^2}{D}$ we have 
$$(1-\rho \eta)^{t-1}\Delta_1 +\frac{D\eta ^2}{\rho}\leq (1-\rho \eta)\Delta_1 +\frac{D\eta^2}{\rho}\leq \Delta_1. $$
Therefore, we have $F(w^{(t)}_0)\leq F(w^{(1)}_0)$, which means $w^{(t)}_0\in S.$

\end{proof}

\subsection{Non-strongly Convex Case Analysis}
\label{convexproof}

\paragraph{Proof for theorem \ref{theorem:nonstrongly}}

\begin{proof}

From Lemma \ref{Lemma:poftau} we know $\mathbb{P}(\tau\leq T)<\frac{\delta}{2}.$

    For $t<\tau$, if $\eta_t=\eta$, from lemma 7 in \citep{nguyen2021unified} we have that
    \begin{equation}
    \label{ineq:convex}
        \|w^{(t+1)}_0-w_*\|^2\leq \|w^{(t)}_0-w_*\|^2-2\eta[F(w^{(t)}_0)-F^*]+\frac{2L\eta^3}{n^3}\sum_{i=1}^{n-1}\|\sum_{j=i}^{n-1} \nabla f(w_*;\pi^{(t)}_{j+1})\|^2,
    \end{equation}
    where $w_*$ is the optimal solution. If we denote $A(t):=\sum_{i=1}^{n-1}\|\sum_{j=i}^{n-1} \nabla f(w_*;\pi^{(t)}_{j+1})\|^2$ and let $\sigma_*:=\sqrt{\frac{1}{n}\sum_{i=1}^n\|\nabla f(w_*;i)\|^2}$, we have that for any $t\in[T]$
    \begin{align*}
    \mathbb{E}[A(t)]&=\sum_{i=0}^{n-1}(n-i)^2\mathbb{E}\left[\left\|\frac{1}{n-i}\sum_{j=i}^{n-1}\nabla f(w_*;\pi^{(t)}_{j+1}-\nabla F(w_*)\right\|^2\right] \\
    & = \sum_{i=0}^{n-1}\frac{(n-i)^2i}{n(n-i)(n-1)}\sum_{j=0}^{n-1}\|\nabla f(w_*;\pi^{(t)}_{j+1})\|^2\\
    & = \frac{n(n+1)\sigma_*^2}{6}.
    \end{align*}
    By optional stopping theorem we know that 
    $$\mathbb{E}\Big[\sum_{t=1}^{\tau-1}\Big(A(t)-\frac{n(n+1)\sigma_*^2}{6}\Big)\Big]=0.$$
    Taking summation from $t=0$ to $\tau-1$ for (\ref{ineq:convex}) and taking expectation we have
    \begin{align*}
        2\eta\mathbb{E}[\sum_{t=1}^{\tau-1}(F(w^{(t)}_0)-F^*)] &\leq \|w^{(1)}_0-w_*\|^2+\frac{2L\eta^3}{n^3}\mathbb{E}[\sum_{t=1}^{\tau-1}\frac{n(n+1)\sigma_*^2}{6}]\\
        &\leq \|w^{(1)}_0-w_*\|^2+\frac{2LT\eta^3\sigma_*^2}{3n},
    \end{align*}
    where the second inequality uses $\tau\leq T+1.$ Therefore, we have
    \begin{align*}
        \frac{1}{2\eta}\Big(\|w^{(1)}_0-w_*\|^2+\frac{2LT\eta^3\sigma_*^2}{3n}\Big)&\geq \mathbb{E}[\sum_{t=1}^{\tau-1}(F(w^{(t)}_0)-F^*)]\\
        &\geq \frac{1}{2}\mathbb{E}[\sum_{t=1}^{T}(F(w^{(t)}_0)-F^*)|\tau=T+1].
    \end{align*}
    
    If we define $\bar{w}_{T}=\frac{1}{T}\sum_{t=1}^{T}w^{(t)}_0$, from convexity we have 
    $$F(\bar{w}_T)-F^*\leq \frac{1}{T}\sum_{t=1}^T [F(w^{(t)}_0)-F^*].$$

    Consider the event $\mathcal{F}:=\{F(\bar{w}_T)-F^*>\epsilon\}$, we have 
    \begin{align*}
        \mathbb{P}(\mathcal{F}|\tau=T+1)&\leq \mathbb{P}(\frac{1}{T}\sum_{t=1}^T (F(w^{(t)}_0)-F^*)>\epsilon|\tau=T+1)\\
        &\leq \frac{\mathbb{E}\left[\sum_{t=1}^T (F(w^{(t)}_0)-F^*)|\tau=T+1\right]}{T\epsilon}\\
        &\leq \frac{1}{\eta T\epsilon} \Big(\|w^{(1)}_0-w_*\|^2+\frac{2LT\eta^3\sigma_*^2}{3n}\Big)\\
        & \leq \frac{2}{\eta T\epsilon}\|w^{(1)}_0-w_*\|^2\\
        & \leq \frac{\delta}{2},
    \end{align*}
    where the last two inequalities are from the choices of $\eta$ and $T$, separately. 
\end{proof}

\paragraph{Proof for Theorem \ref{theorem:nonstronglyany}}

\begin{proof}

Similar to Theorem \ref{theorem:stronglyconvexany}, if we have $w^{(t)}_0\in S$ for $t\in[T]$, we have the desired Lipschitz smoothness.

Now we prove the conclusion by trying to prove that $w^{(t)}_0\in S$ for $t\in[T]$. The statement is obviously true for $t=1$. Now for $t\in[2, T]$, assume that we already proved the conclusion for $1,\cdots, t-1$, we can use Lipschitz smoothness in the first $t-1$ iterations. Therefore, from \eqref{ineq:convex} we have
$$ \|w^{(t)}_0-w_*\|^2\leq \|w^{(t-1)}_0-w_*\|^2-2\eta[F(w^{(t-1)}_0)-F^*]+\frac{2L\eta^3}{n^3}\sum_{i=1}^{n-1}\|\sum_{j=i}^{n-1} \nabla f(w_*;\pi^{(t-1)}_{j+1})\|^2.$$

If $F(w^{(t-1)}_0)-F^*\leq \epsilon$, we have the desired conclusion.

If $F(w^{(t-1)}_0)-F^*> \epsilon$, since $\eta \leq \frac{1}{G'}\sqrt\frac{3\epsilon}{L}$, we have 
\begin{align*}
    \|w^{(t+1)}_0-w_*\|^2 & \leq \|w^{(t)}_0-w_*\|^2-2\eta \epsilon+\frac{2L\eta^3}{n^3}\sum_{i=1}^{n-1}(n-i)^2 G'^2\\
    &\leq \|w^{(t)}_0-w_*\|^2-2\eta \epsilon+\frac{2L\eta^3}{n^3}\frac{n^3}{3}G'^2 \\
    & \leq \|w^{(t)}_0-w_*\|^2.
\end{align*}
Therefore, if $F(w^{(t)}_0)-F^*\geq \epsilon $ for $t\in[T]$, we have $w^{(t)}_0\in S$ for $t\in[T].$ Taking summation we have that 
$$2\eta \sum_{t=1}^T[F(w^{(t)}_0)-F(w_*)]\leq \|w^{(1)}_0-w_*\|^2+\frac{2LG'^2\eta^3T}{3}.$$
Therefore we have 
$$\frac{1}{T} \sum_{t=1}^T[F(w^{(t)}_0)-F(w_*)]\leq \frac{1}{2\eta T}\left(\|w^{(1)}_0-w_*\|^2+\frac{2LG'^2\eta^3T}{3}\right)\leq \epsilon.$$
However, this contradict the assumption that $F(w^{(t)}_0)-F^*\geq \epsilon $ for $t\in[T]$. Therefore, there must be $t\in[T]$ such that $F(w^{(t)}_0)-F^*\leq \epsilon$.

\end{proof}

\end{document}